\def\eqref#1{equation~\ref{#1}}
\def\1{\bm{1}}
\def\vzero{{\bm{0}}}
\DeclareMathAlphabet{\mathsfit}{\encodingdefault}{\sfdefault}{m}{sl}
\SetMathAlphabet{\mathsfit}{bold}{\encodingdefault}{\sfdefault}{bx}{n}
\newcommand{\sigmoid}{\sigma}
\newcommand{\doi}[1]{\href{https://doi.org/#1}{{doi: \urlstyle{rm}\nolinkurl{#1}}}}
\newtheorem{theorem}{Theorem}
\newtheorem{proposition}[theorem]{Proposition}
\newtheorem{lemma}[theorem]{Lemma}
\theoremstyle{definition}
\newtheorem{definition}{Definition}
\pgfplotsset{compat=1.16}
\newcommand{\Affine}[2]{\Wparam{#1} #2 + \bparam{#1}}
\newcommand{\bparam}[1]{\mathbf{b}_{\mathrm{#1}}}
\newcommand{\indicator}[1]{\mathbb{I}[#1]}
\newcommand{\multisym}[1]{{\setlength{\fboxsep}{0pt}\fbox{\strut\kern1pt$#1$\kern1pt}}}
\newcommand{\reverse}[1]{{#1}^R}
\newcommand{\sym}[1]{\texttt{#1}}
\newcommand{\tprev}{t\mathord-1}
\newcommand{\trans}[5]{\ensuremath{#1,#3\xrightarrow{#2} #4,#5}}
\newcommand{\transtensor}[5]{\ensuremath{\Delta[#1][#2,#3\rightarrow #4,#5]}}
\newcommand{\Wparam}[1]{\mathbf{W}_{\mathrm{#1}}}
\newcommand{\atransition}{\tau}
\newcommand{\CountThree}{\ensuremath{\sym{a}^n \sym{b}^n \sym{c}^n}}
\newcommand{\MarkedReverseAndCopy}{\ensuremath{w \sym{\#} \reverse{w} \sym{\#} w}}
\newcommand{\CountAndCopy}{\ensuremath{w \sym{\#}^n w}}
\newcommand{\MarkedCopy}{\ensuremath{w \sym{\#} w}}
\newcommand{\UnmarkedCopyDifferentAlphabets}{\ensuremath{w w'}}
\newcommand{\UnmarkedReverseAndCopy}{\ensuremath{w \reverse{w} w}}
\newcommand{\UnmarkedCopy}{\ensuremath{ww}}
\newcommand{\MarkedReversal}{\ensuremath{w \sym{\#} \reverse{w}}}
\newcommand{\UnmarkedReversal}{\ensuremath{w \reverse{w}}}
\title{The Surprising Computational Power of \\ Nondeterministic Stack RNNs}
\author{Brian DuSell and David Chiang \\
Department of Computer Science and Engineering \\
University of Notre Dame \\
\texttt{\{bdusell1,dchiang\}@nd.edu}}
\begin{document}

\maketitle

\begin{abstract}
Traditional recurrent neural networks (RNNs) have a fixed, finite number of memory cells. In theory (assuming bounded range and precision), this limits their formal language recognition power to regular languages, and in practice, RNNs have been shown to be unable to learn many context-free languages (CFLs). In order to expand the class of languages RNNs recognize, prior work has augmented RNNs with a nondeterministic stack data structure, putting them on par with pushdown automata and increasing their language recognition power to CFLs. Nondeterminism is needed for recognizing all CFLs (not just deterministic CFLs), but in this paper, we show that nondeterminism and the neural controller interact to produce two more unexpected abilities. First, the nondeterministic stack RNN can recognize not only CFLs, but also many non-context-free languages. Second, it can recognize languages with much larger alphabet sizes than one might expect given the size of its stack alphabet. Finally, to increase the information capacity in the stack and allow it to solve more complicated tasks with large alphabet sizes, we propose a new version of the nondeterministic stack that simulates stacks of vectors rather than discrete symbols. We demonstrate perplexity improvements with this new model on the Penn Treebank language modeling benchmark.
\end{abstract}

\section{Introduction}

Standard recurrent neural networks (RNNs), including simple RNNs \citep{elman:1990}, GRUs \citep{cho+:2014}, and LSTMs \citep{hochreiter+:1997}, rely on a fixed, finite number of neurons to remember information across timesteps. When implemented with finite precision, they are theoretically just very large finite automata, restricting the class of formal languages they recognize to regular languages \citep{kleene:1951}. In practice, too, LSTMs cannot learn simple non-regular languages such as $\{ w \sym{\#} \reverse{w} \mid w \in \{ \sym{0}, \sym{1} \}^\ast \}$ \citep{dusell+:2020}. To increase the theoretical and practical computational power of RNNs, past work has proposed augmenting RNNs with stack data structures \citep{sun+:1995,grefenstette+:2015,joulin+:2015,dusell+:2020}, inspired by the fact that adding a stack to a finite automaton makes it a pushdown automaton (PDA), raising its recognition power to context-free languages (CFLs).

Recently, we proposed the nondeterministic stack RNN (NS-RNN) \citep{dusell+:2020} and renormalizing NS-RNN (RNS-RNN) \citep{dusell+:2022}, augmenting an LSTM with a differentiable data structure that simulates a real-time nondeterministic PDA. (The PDA is \textit{real-time} in that it executes exactly one transition per input symbol scanned, and it is \textit{nondeterministic} in that it executes all possible sequences of transitions.) This was in contrast to prior work on stack RNNs, which exclusively modeled deterministic stacks, theoretically limiting such models to deterministic CFLs (DCFLs), which are a proper subset of CFLs \citep{sipser:2013}. The RNS-RNN proved more effective than deterministic stack RNNs at learning both nondeterministic and deterministic CFLs.

In practical terms, giving RNNs the ability to recognize context-free patterns may be beneficial for modeling natural language, as syntax exhibits hierarchical structure; nondeterminism in particular is necessary for handling the very common phenomenon of syntactic ambiguity. However, the RNS-RNN's reliance on a PDA may still render it inadequate for practical use. For one, not all phenomena in human language are context-free, such as cross-serial dependencies. Secondly, the RNS-RNN's computational cost restricts it to small stack alphabet sizes, which is likely insufficient for storing detailed lexical information. In this paper, we show that the RNS-RNN is surprisingly good at overcoming both difficulties. Whereas an ordinary weighted PDA must use the same transition weights for all timesteps, the RNS-RNN can update them based on the status of ongoing nondeterministic branches of the PDA. This means it can coordinate multiple branches in a way a PDA cannot---for example, to simulate multiple stacks, or to encode information in the \emph{distribution} over stacks.

Our contributions in this paper are as follows. We first prove that the RNS-RNN can recognize all CFLs and intersections of CFLs despite restrictions on its PDA transitions. We show empirically that the RNS-RNN can model some non-CFLs; in fact it is the only stack RNN able to learn $\{ w \sym{\#} w \mid w \in \{ \sym{0}, \sym{1} \}^\ast \}$, whereas a deterministic multi-stack RNN cannot. We then show that, surprisingly, an RNS-RNN with only 3 stack symbol types can learn to simulate a stack of no fewer than 200 symbol types, by encoding them as points in a vector space related to the distribution over stacks. Finally, in order to combine the benefits of nondeterminism and vector representations, we propose a new model that simulates a PDA with a stack of vectors instead of discrete symbols. We show that the new vector RNS-RNN outperforms the original on the Dyck language and achieves better perplexity than other stack RNNs on the Penn Treebank. Our code is publicly available.\footnote{\url{https://github.com/bdusell/nondeterministic-stack-rnn}}

\section{Stack RNNs}
\label{sec:stack-rnns}

In this paper, we examine two styles of stack-augmented RNN, using the same architectural framework as in our previous work \citep{dusell+:2022} (\cref{fig:stack-rnn-diagram}). In both cases, the model consists of an LSTM, called the \textit{controller}, connected to a differentiable stack. At each timestep, the stack receives \textit{actions} from the controller (e.g. to push and pop elements). The stack simulates those actions and produces a \textit{reading} vector, which represents the updated top element of the stack. The reading is fed as an extra input to the controller at the next timestep. The actions and reading consist of continuous and differentiable weights so the whole model can be trained end-to-end with backpropagation; their form and meaning vary depending on the particular style of stack.

We assume the input $w = w_1 \cdots w_n$ is encoded as a sequence of vectors $\mathbf{x}_1, \cdots, \mathbf{x}_n$. The LSTM's memory consists of a hidden state $\mathbf{h}_t$ and memory cell $\mathbf{c}_t$ (we set $\mathbf{h}_0 = \mathbf{c}_0 = \vzero$). The controller computes the next state~$(\mathbf{h}_t, \mathbf{c}_t)$ given the previous state, input vector $\mathbf{x}_t$, and stack reading $\mathbf{r}_{t-1}$:
\[ (\mathbf{h}_t, \mathbf{c}_t) = \mathrm{LSTM}\left((\mathbf{h}_{t-1}, \mathbf{c}_{t-1}), \begin{bmatrix}
\mathbf{x}_t \\
\mathbf{r}_{t-1}
\end{bmatrix}\right). \]
The hidden state generates the stack actions $a_t$ and logits $\mathbf{y}_t$ for predicting the next word $w_{t+1}$. The previous stack and new actions generate a new stack $s_t$, which produces a new reading $\mathbf{r}_t$:
\begin{align*}
& a_t = \textsc{Actions}(\mathbf{h}_t) && \mathbf{y}_t = \Affine{hy}{\mathbf{h}_t} && s_t = \textsc{Stack}(s_{t-1}, a_t) && \mathbf{r}_t = \textsc{Reading}(s_t).
\end{align*}
Each style of stack differs only in the definitions of \textsc{Actions}, \textsc{Stack}, and \textsc{Reading}.

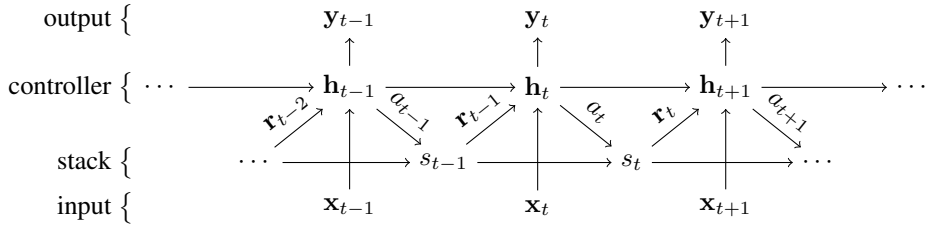
\begin{figure}
\centering
\begin{tikzpicture}[x=2.5cm,y=2cm]
\node (htprev2) at (-2, 0) {$\cdots$};
\node (stprev2) at (-1.5, -0.5) {$\cdots$};
\node (htprev) at (-1, 0) {$\mathbf{h}_{t-1}$};
\node (xtprev) at (-1, -0.8) {$\mathbf{x}_{t-1}$};
\node (ytprev) at (-1, 0.45) {$\mathbf{y}_{t-1}$};
\node (stprev) at (-0.5, -0.5) {$s_{t-1}$};
\node (ht) at (0, 0) {$\mathbf{h}_t$};
\node (xt) at (0, -0.8) {$\mathbf{x}_t$};
\node (yt) at (0, 0.45) {$\mathbf{y}_t$};
\node (st) at (0.5, -0.5) {$s_t$};
\node (htnext) at (1, 0) {$\mathbf{h}_{t+1}$};
\node (xtnext) at (1, -0.8) {$\mathbf{x}_{t+1}$};
\node (ytnext) at (1, 0.45) {$\mathbf{y}_{t+1}$};
\node (stnext) at (1.5, -0.5) {$\cdots$};
\node (htnext2) at (2, 0) {$\cdots$};
\draw[->] (htprev2) edge (htprev);
\draw[->] (stprev2) edge node[sloped,above] {$\mathbf{r}_{t-2}$} (htprev);
\draw[->] (stprev2) edge (stprev);
\draw[->] (xtprev) edge (htprev);
\draw[->] (htprev) edge (ytprev);
\draw[->] (htprev) edge node[sloped,above] {$a_{t-1}$} (stprev);
\draw[->] (stprev) edge node[sloped,above] {$\mathbf{r}_{t-1}$} (ht);
\draw[->] (htprev) edge (ht);
\draw[->] (stprev) edge (st);
\draw[->] (ht) edge (htnext);
\draw[->] (xt) edge (ht);
\draw[->] (ht) edge (yt);
\draw[->] (ht) edge node[sloped,above] {$a_t$} (st);
\draw[->] (st) edge node[sloped,above] {$\mathbf{r}_t$} (htnext);
\draw[->] (st) edge (stnext);
\draw[->] (xtnext) edge (htnext);
\draw[->] (htnext) edge (ytnext);
\draw[->] (htnext) edge node[sloped,above] {$a_{t+1}$} (stnext);
\draw[->] (htnext) edge (htnext2);

\node[anchor=east] at (-2.1,0.45) {output \big\{};
\node[anchor=east] at (-2.1,0) {controller \big\{};
\node[anchor=east] at (-2.1,-0.5) {stack \big\{};
\node[anchor=east] at (-2.1,-0.8) {input \big\{};
\end{tikzpicture}
\caption{Conceptual diagram of the RNN controller-stack interface, unrolled across a portion of time. The LSTM memory cell $\mathbf{c}_t$ is not shown.}
\label{fig:stack-rnn-diagram}
\end{figure}

\subsection{Superposition stack RNN}
\label{sec:superposition-stack-rnn}

We start by describing a stack with deterministic actions---the superposition stack of \citet{joulin+:2015}---which we include because it was one of the best-performing stack RNNs we investigated previously \citeyearpar{dusell+:2020,dusell+:2022}. The superposition stack simulates a combination of partial stack actions by computing three new, separate stacks: one with all cells shifted down (push), kept the same (no-op), and shifted up (pop). The new stack is an element-wise interpolation (``superposition'') of these three stacks. The stack elements are vectors, and $a_t = (\mathbf{a}_t, \mathbf{v}_t)$, where the vector $\mathbf{a}_t$ is a probability distribution over the three stack operations. The push operation pushes vector $\mathbf{v}_t$, which can be learned or set to $\mathbf{h}_t$. The stack reading is the top vector element.

\subsection{Renormalizing nondeterministic stack RNN}
\label{sec:rns-rnn}

The main focus of this paper is the renormalizing nondeterministic stack RNN (RNS-RNN) \citep{dusell+:2022}. The RNS-RNN's stack module is a simulation of a real-time weighted PDA, complete with its own finite state machine and stack. Let $Q$ be the set of states and $\Gamma$ be the stack alphabet of the PDA. The initial PDA state is $q_0 \in Q$, and the initial stack is $\bot \in \Gamma$. The PDA's computation is governed by weighted \textit{transitions} that manipulate its state and stack contents; a valid sequence of transitions is called a \textit{run}. The PDA is \emph{nondeterministic} in that all possible runs are simulated in parallel. Each run has a weight, which is the product of the weights of its transitions.

The \textsc{Actions} emitted by the controller are the PDA's transition weights. The weights at $t$, denoted $\Delta[t]$, are computed as $\Delta[t] = \exp(\Affine{a}{\mathbf{h}_t})$. Let $q, r \in Q$ and $u, v \in \Gamma^\ast$, and let $\Delta[t][q, u \rightarrow r, v]$ denote the weight, at timestep $t$, of popping $u$ from the stack, pushing $v$, and transitioning to state $r$ if the previous state was $q$ and the previous stack top was $u$. Transitions have one of three forms, where $x, y \in \Gamma$: $\Delta[t][q, x \rightarrow r, xy]$ (push $y$), $\Delta[t][q, x \rightarrow r, y]$ (replace $x$ with $y$), and $\Delta[t][q, x \rightarrow r, \varepsilon]$ (pop $x$). We say that transitions limited to these three forms are in \textit{restricted form}.

The \textsc{Reading} is the marginal distribution, over all runs, of each pair $(r, y) \in Q \times \Gamma$, where $r$ is the current PDA state, and $y$ is the top stack symbol. Let $\atransition_i$ be a PDA transition, let $\pi = \atransition_1 \cdots \atransition_t$ be a PDA run, and let $\psi(\pi) = \prod_{i=1}^{t} \Delta[i][\atransition_i]$ be the weight of run $\pi$. Let $\pi \leadsto t, r, y$ mean that run $\pi$ ends at timestep $t$ in state $r$ with $y$ on top of the stack. The stack reading $\mathbf{r}_t \in \mathbb{R}^{|Q| \cdot |\Gamma|}$ is defined as
\begin{equation}
\mathbf{r}_t[(r, y)] = \frac{\sum_{\pi \leadsto t, r, y} \psi(\pi)}{\sum_{r', y'} \sum_{\pi \leadsto t, r', y'} \psi(\pi)}.
\label{eq:rns-rnn-reading-inefficient}
\end{equation}
\Cref{eq:rns-rnn-reading-inefficient} is sufficient for describing the RNS-RNN mathematically, but it sums over an exponential number of runs, so the RNS-RNN relies on a dynamic programming algorithm to compute it in $O(n^3)$ time \citep{lang:1974}. The rest of this section describes this algorithm and may safely be skipped unless the reader is interested in these implementation details or in \cref{eq:vrns-zeta-bottom,eq:vrns-zeta,eq:vrns-reading}.

The algorithm uses a tensor of weights called the \textit{stack WFA}, so named because it can be viewed as a weighted finite automaton (WFA) that encodes the weighted language of all possible stacks the PDA can have at time $t$. Each WFA state is of the form $(i, q, x)$, representing a configuration where the PDA is in state $q$ with stack top $x$ at time $i$. For any WFA transition from $(i, q, x)$ to $(t, r, y)$, its weight is equal to the sum of the weights of all runs that bring the PDA from configuration $(i, q, x)$ to $(t, r, y)$ (possibly over multiple timesteps) without modifying $x$, with the net effect of putting a single $y$ on top of it. The tensor containing the stack WFA's transition weights, also called \textit{inner weights}, is denoted $\gamma$, and elements are written as $\gamma[i \rightarrow t][q, x \rightarrow r, y]$. For $1 \leq t \leq n-1$ and $-1 \leq i \leq t-1$,
\begin{align}
    \gamma[-1 \rightarrow 0][q, x \rightarrow r, y] &= \indicator{q = q_0 \wedge x = \bot \wedge r = q_0 \wedge y = \bot} && \text{init.} \label{eq:new-rns-rnn-gamma-init} \\
    \gamma[i \rightarrow t][q, x \rightarrow r, y] &= \indicator{i=\tprev} \; \transtensor tqxr{xy} && \text{push} \label{eq:new-rns-rnn-gamma}\\
        & + \sum_{s,z} \gamma[i \rightarrow \tprev][q, x \rightarrow s, z] \; \transtensor tszry && \text{repl.} \notag\\
        & + \sum_{k=i+1}^{t-2} \sum_{u} \gamma[i \rightarrow k][q, x \rightarrow u, y] \; \gamma'[k \rightarrow t][u, y \rightarrow r] && \text{pop} \notag\\
    \gamma'[k \rightarrow t][u, y \rightarrow r] &= \sum_{s, z} \gamma[k \rightarrow \tprev][u, y \rightarrow s, z] \; \transtensor tszr\varepsilon \quad \mathrlap{(0 \leq k \leq t-2).} && \qquad \qquad \label{eq:new-rns-rnn-gamma-prime}
\end{align}
The RNS-RNN sums over all runs using a tensor of \textit{forward weights} denoted $\alpha$, where the element $\alpha[t][r, y]$ is the total weight of reaching the stack WFA state $(t, r, y)$. These weights are normalized to get the final stack reading at $t$.
\begin{align}
    \alpha[-1][r, y] &= \indicator{r = q_0 \wedge y = \bot} \label{eq:new-rns-rnn-alpha-init} \\
    \alpha[t][r, y] &=
    \sum_{i=-1}^{t-1} \sum_{q,x} \alpha[i][q, x] \, \gamma[i \rightarrow t][q, x \rightarrow r, y] \quad (0 \leq t \leq n-1) \label{eq:new-rns-rnn-alpha} \\
    \mathbf{r}_t[(r, y)] &= \frac{ \alpha[t][r, y] }{ \sum_{r', y'} \alpha[t][r', y'] }. \label{eq:rns-rnn-reading}
\end{align}

We have departed slightly from the original definitions of $\gamma$ and $\alpha$ \citep{dusell+:2022}, for two reasons: (1) to implement an asymptotic speedup by a factor of $|Q|$ \citep{butoi+:2022}, and (2) to fix a peculiarity with the behavior of the initial $\bot$. See \cref{sec:rns-rnn-improvements} for details.

\section{Recognition power}
\label{sec:proofs}

In this section, we investigate the power of RNS-RNNs as language recognition devices, proving that they can recognize all CFLs and all intersections of CFLs. These results hold true even when the RNS-RNN is run in real time (one timestep per input, with one extra timestep to read $\texttt{EOS}$). Although \citet{siegelmann+:1992} showed that even simple RNNs are as powerful as Turing machines, this result relies on assumptions of infinite precision and unlimited extra timesteps, which generally do not hold true in practice. The same limitation applies to the neural Turing machine \citep{graves+:2014}, which, when implemented with finite precision, is no more powerful than a finite automaton, as its tape does not grow with input length. Previously, \citet{stogin+:2020} showed that a variant of the superposition stack is at least as powerful as real-time DPDAs. Here we show that RNS-RNNs recognize a much larger superset of languages.

For this section only, we allow parameters to have values of $\pm\infty$, to enable the controller to emit probabilities of exactly zero. Because we use RNS-RNNs here for accepting or rejecting strings (whereas in the rest of the paper, we only use them for predicting the next symbol), we start by providing a formal definition of language recognition for RNNs \citep[cf.][]{chen+:2018}.
\begin{definition}
Let $N$ be an RNN controller, possibly augmented with one of the stack modules above. 
Let $\mathbf{h}_t \in \mathbb{R}^d$ be the hidden state of $N$ after reading $t$ symbols, and let $\sigma$ be the logistic sigmoid function.
We say that $N$ \emph{recognizes} language $L$ if there is an MLP layer $y = \sigma(\Affine{2}{\,\sigma(\Affine{1}{\mathbf{h}_{|w|+1}})})$ such that, after reading $w \cdot \texttt{EOS}$, we have $y > \tfrac{1}{2}$ iff $w \in L$.
\end{definition}

The question of whether RNS-RNNs can recognize all CFLs can be reduced to the question of whether real-time PDAs with transitions in restricted form (\cref{sec:rns-rnn}) can recognize all CFLs. The real-time requirement does not reduce the power of PDAs \citep{greibach:1965}, but what about restricted form? We prove that it does not either, and so RNS-RNNs can recognize all CFLs.

\begin{proposition} \label{thm:cfl}
For every context-free language $L$,  there exists an RNS-RNN that recognizes $L$.
\end{proposition}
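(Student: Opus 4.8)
The plan is to do exactly what the surrounding text advertises: reduce the claim to the construction of a \emph{real-time} PDA with transitions in restricted form (\cref{sec:rns-rnn}). Once we have such a PDA $M$ for $L$, the RNS-RNN's LSTM controller only has to turn the current input symbol into transition weights. The controller sees the one-hot vector $\mathbf{x}_t$, and an LSTM can copy its input into its hidden state, so the action layer $\Delta[t] = \exp(\Affine{a}{\mathbf{h}_t})$ can be set so that at every step the transitions of $M$ whose input symbol is $w_t$ receive weight $1$ (logit $0$) and all others receive weight $0$ (logit $-\infty$, which is allowed in this section). So essentially all the work is in the PDA construction and in reading acceptance back out of the stack reading.

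For the PDA, I would start from a grammar $G=(N,\Sigma,P,S)$ for $L$ in Greibach normal form \citep{greibach:1965}, which by standard transformations we may take to have every production of the form $A\to a$, $A\to aB$, or $A\to aBC$; assume also $\varepsilon\notin L$, the empty string requiring only a trivial patch to the output layer. Such a grammar induces a real-time top-down parser whose stack holds the remaining suffix of the current sentential form. The only obstacle is that applying $A\to aBC$ requires pushing \emph{two} symbols in one step, which restricted form forbids. I get around this by carrying one pending nonterminal in the finite state: take $Q=\{q_\varepsilon\}\cup\{q_B : B\in N\}\cup\{q_{\mathrm{sink}}\}$, $\Gamma=N\cup\{\bot\}$, and $q_0=q_S$, with the invariant that in state $q_\varepsilon$ the next nonterminal to expand is the physical stack top (or, if it is $\bot$, the derivation is finished), while in state $q_B$ the next nonterminal to expand is $B$, sitting ``virtually'' above the physical top. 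For each production $A\to a\gamma$ and each admissible physical top $x$, add restricted-form transitions: from $q_A$, reading $a$, go to $q_\varepsilon$ leaving $x$ unchanged (if $\gamma=\varepsilon$), to $q_\varepsilon$ pushing the one symbol of $\gamma$ (if $|\gamma|=1$), or to $q_B$ pushing the lower symbol of $\gamma=BC$ (if $|\gamma|=2$); symmetrically, from $q_\varepsilon$ with top $A$, reading $a$, pop $A$, replace $A$, or replace $A$ by the lower symbol of $\gamma=BC$ while moving to $q_B$. Each transition consumes exactly one input symbol, so $M$ is real-time, and each is a single push, replace, or pop, so $M$ is in restricted form. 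A routine induction then shows that runs of $M$ of length $n$ from $(q_0,\bot)$ to $(q_\varepsilon,\bot)$ are in bijection with leftmost derivations of $w=w_1\cdots w_n$ from $S$, hence $w\in L$ iff such a run exists.

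It remains to read acceptance off the stack reading. With all weights set to $1$ as above, every run of $M$ compatible with $w$ has weight $1$, so the forward weight $\alpha[n][q_\varepsilon,\bot]$ counts the leftmost derivations of $w$ and is positive exactly when $w\in L$. To keep the normalization in \cref{eq:rns-rnn-reading} well defined even when $M$ would otherwise get stuck, I add from every configuration a weight-$1$ transition into the self-looping state $q_{\mathrm{sink}}$ (a state change with no change to the stack top); since $q_{\mathrm{sink}}\neq q_\varepsilon$ these runs only add to the denominator, which is then always positive, and $\mathbf{r}_n[(q_\varepsilon,\bot)]>0$ iff $w\in L$. When the controller reads $\texttt{EOS}$ it copies this coordinate of $\mathbf{r}_n$ into $\mathbf{h}_{|w|+1}$, and the output MLP applies $y=\sigma(c\cdot p)$ for a large constant $c$ and $p=\mathbf{r}_n[(q_\varepsilon,\bot)]\ge 0$; as $\sigma$ is strictly increasing with $\sigma(0)=\tfrac12$, we get $y>\tfrac12$ iff $p>0$ iff $w\in L$. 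The conceptual crux is the finite-state buffering of the second paragraph---the fact that at most one pending nonterminal ever has to be remembered, which is exactly what the bounded right-hand sides of the normal form buy us; the readout is then routine, the only care being the sink state that rules out a $0/0$. I expect nondeterminism to be essential throughout: several productions $A\to a\gamma$ may apply simultaneously (and inherently ambiguous $L$ force this), and the RNS-RNN only needs \emph{some} run to reach $(q_\varepsilon,\bot)$, a test a deterministic stack cannot perform.
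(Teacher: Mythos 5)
Your proof is correct, and it reaches the restricted-form real-time PDA by a genuinely different construction than the paper's. Both arguments hinge on the same obstacle --- restricted form forbids replacing the stack top and pushing new symbols in a single scanning step --- but you and the paper buy the extra room in different currencies. The paper converts the grammar to a ``2-GNF'' in which every long production begins with \emph{two} terminals, $A \rightarrow ab\,B_1\cdots B_p$; this yields two scanned symbols per expansion, so the PDA can do a replace on $a$ and a push on $b$, and the remaining multi-symbol pushes are then flattened by enlarging the stack alphabet to $\Gamma \cup \Gamma^2 \cup \cdots \cup \Gamma^k$ so that a block of symbols is a single stack symbol. You instead use the quadratic Greibach form $A \rightarrow a$, $A \rightarrow aB$, $A \rightarrow aBC$ and pay in \emph{states} rather than input symbols or stack-alphabet size: buffering at most one pending nonterminal in the finite control makes every simulated expansion a single push, replace, or pop, so no alphabet blow-up is needed. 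Your version keeps $|\Gamma| = |N|+1$ at the cost of $O(|N|)$ states, whereas the paper keeps a small state set at the cost of a stack alphabet exponential in the maximum push length; since the RNS-RNN's runtime is polynomial in both $|Q|$ and $|\Gamma|$, neither dominates, and both establish the proposition. Your readout (uniform weight $1$ on transitions scanning $w_t$, weight $0$ elsewhere, a sink/trap state to keep the normalizer positive, and a monotone squashing of $\mathbf{r}_n[(q_\varepsilon,\bot)]$ so that $y > \tfrac12$ iff the coordinate is positive) matches the paper's essentially verbatim, modulo the paper's more pedantic routing through the LSTM gates and the two-layer output MLP required by the recognition definition; that bookkeeping is routine and your sketch of it is adequate.
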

\begin{proof}[Proof sketch]
We construct a CFG for $L$ and convert it into a modified Greibach normal form, called 2-GNF, which we can convert into a PDA $P$ whose transitions are in restricted form. Then we construct an RNS-RNN that emits, at every timestep, weight 1 for transitions of $P$ and 0 for all others. Then $y > \frac{1}{2}$ iff the PDA ends in an accept configuration. See \cref{sec:cfl-proof-details} for details.
\end{proof}

\begin{proposition} \label{thm:intersect}
For every finite set of context-free languages $L_1, \ldots, L_k$ over the same alphabet $\Sigma$, there exists an RNS-RNN that recognizes $L_1 \cap \cdots \cap L_k$.
\end{proposition}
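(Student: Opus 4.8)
The plan is to reduce to \cref{thm:cfl} while exploiting the one respect in which an RNS-RNN strictly exceeds a single weighted PDA: its verdict is an MLP applied to the \emph{entire} normalized marginal $\mathbf{r}_n$ over PDA configurations, not merely ``does some run accept''. Since a single PDA recognizes only CFLs, the non-context-free language $L_1 \cap \cdots \cap L_k$ cannot be read off from the acceptance behavior of any one PDA; instead I will have one PDA $P$ nondeterministically commit, on its first move, to checking one $L_j$, so that after reading $w$ there is a positive-weight run of $P$ ending in an accepting state of $P_j$ iff $w \in L_j$, and then have the readout MLP take the conjunction of these $k$ tests.

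Concretely, first apply the construction behind \cref{thm:cfl} to each $L_j$ to get a real-time PDA $P_j$ with transitions in restricted form and $L(P_j)=L_j$. By standard normalizations assume the $P_j$ accept by final state (accepting sets $F_j$), never get stuck, have pairwise-disjoint state sets, and — after renaming stack symbols — share the bottom symbol $\bot$ while keeping the rest of their stack alphabets pairwise disjoint. Build one PDA $P$ with a fresh initial state $q_0$, stack alphabet $\{\bot\}\cup\bigcup_j(\Gamma_j\setminus\{\bot\})$, accepting states $\bigcup_j F_j$, and transitions: every transition of every $P_j$ verbatim, plus, for each $j$ and each transition $(q_{0,j},\bot)\to(r,v)$ of $P_j$ on a symbol $a$, the transition $(q_0,\bot)\to(r,v)$ on $a$. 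Because $\bot$ is the bottom symbol of each $P_j$, every added transition has exactly the stack effect it had in $P_j$ and so is still in restricted form. Since $P$ can enter a $Q_j$-state only by committing to $P_j$ and then simulates $P_j$ step for step, on input $w$ it has a weight-$1$ run of length $|w|$ ending in $F_j$ iff $P_j$ accepts $w$ iff $w\in L_j$.

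Second, realize $P$ in an RNS-RNN as in \cref{thm:cfl}: configure the LSTM controller so that $\mathbf{h}_t$ exposes a one-hot encoding of $w_t$ together with the previous reading $\mathbf{r}_{t-1}$, and choose $\Wparam{a},\bparam{a}$ so that $\Delta[t]=\exp(\Affine{a}{\mathbf{h}_t})$ assigns weight $1$ to every transition of $P$ valid on the scanned symbol and weight $0$ (log-weight $-\infty$) to all others; this reads only the one-hot part, so the feedback loop is unused while scanning $w$. Because the $P_j$ never get stuck, $Z_n:=\sum_{r,y}\alpha[n][r,y]>0$, so $\mathbf{r}_n$ is well-defined, and $z_j:=\sum_{q\in F_j}\sum_y \mathbf{r}_n[(q,y)]>0$ iff $w\in L_j$. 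The extra \texttt{EOS} step merely feeds $\mathbf{r}_n$ into $\mathbf{h}_{n+1}$; the two-layer readout MLP then computes $u_j=\sigma(\infty\cdot z_j)$ in its first layer (so $u_j=1$ if $z_j>0$ and $u_j=\tfrac12$ if $z_j=0$, using the $\pm\infty$ parameters permitted in this section) and $y=\sigma\bigl(\infty\,(\sum_j u_j-(k-\tfrac14))\bigr)$ in its second; then $y>\tfrac12$ iff every $u_j=1$, i.e.\ iff $w\in L_1\cap\cdots\cap L_k$.

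The crux is conceptual: because a single PDA can recognize only a CFL, the proof must not try to make $P$ ``accept'' the intersection, but instead route the AND of $k$ membership tests through the MLP acting on the full marginal distribution — exactly the capability that distinguishes the RNS-RNN from an ordinary weighted PDA. Everything else is routine bookkeeping: keeping the ``commitment'' transitions in restricted form (handled by sharing $\bot$), and thresholding the $z_j$, which can be exponentially small (handled by permitting $\pm\infty$ weights in the readout).
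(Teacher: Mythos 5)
Your overall architecture matches the paper's proof (\cref{sec:intersect-proof}): form the standard union PDA $P$ that nondeterministically commits to simulating one $P_j$, drive it with $0$/$1$ transition weights inside the RNS-RNN, read off from $\mathbf{r}_n$ a per-language marginal that is positive iff $w\in L_j$, and let the output MLP compute the conjunction. The gap is in the one step that is genuinely delicate: converting ``$z_j>0$ versus $z_j=0$'' into a saturated indicator. You set $u_j=\sigma(\infty\cdot z_j)$ and assert $u_j=\tfrac12$ when $z_j=0$, but $\infty\cdot 0$ is an indeterminate form, and the section's allowance of $\pm\infty$ \emph{parameters} does not by itself license the convention $\infty\cdot 0=0$. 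A finite weight does not work either: $z_j$ can be as small as $1/\textit{ct}(w)$, which is exponentially small in $n$, so $\sigma(c\,z_j)$ can be arbitrarily close to $\tfrac12$ from above and no fixed threshold on $\sum_j u_j$ separates ``all positive'' from ``one zero and the rest large.'' This is precisely why the paper inserts an extra gadget: a memory cell $c^{\mathrm{offset}}_t$ whose forget gate computes $b^{-t}$ (with $b$ bounding the per-step branching factor), yielding a hidden unit $h^{\mathrm{offset}}_n=\tanh(b^{-n})$ that lies strictly between $0$ and the smallest achievable nonzero value of $p_i$. Subtracting it makes the pre-activation strictly positive or strictly negative, so $\sigma(\infty\cdot(\,\cdot\,))$ is well-defined and lands exactly in $\{0,1\}$. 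Your proof needs this device (or an equivalent one); as written, the readout is ill-defined exactly on the inputs that decide membership.

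Two smaller points. First, you sum $\mathbf{r}_n[(q,y)]$ over all $y$, which is correct only if the $P_j$ accept by final state alone; the restricted-form PDAs produced by the \cref{thm:cfl} construction accept by final state \emph{with $\bot$ on top}, and runs ending in $q_{\mathrm{loop}}$ with pending nonterminals still on the stack would be wrongly counted. Restricting the sum to $y=\bot$, as the paper does, costs nothing. Second, the empty string leaves the automaton in $q_0\notin\bigcup_j F_j$, so $q_0$ must be made accepting iff every $P_j$ accepts $\varepsilon$.
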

\begin{proof}[Proof sketch]
Without loss of generality, assume $k=2$. Let $P_1$ and $P_2$ be PDAs recognizing $L_1$ and $L_2$, respectively. We construct a PDA $P$ that uses nondeterminism to simulate $P_1$ \emph{or} $P_2$, but the controller can query $P_1$ and $P_2$'s configurations, and it can set $y > \frac{1}{2}$ iff $P_1$ \emph{and} $P_2$ both end in accept configurations. See \cref{sec:intersect-proof} for details.
\end{proof}

Since the class of languages formed by the intersection of $k$ CFLs is a proper superset of the class formed by the intersection of $(k-1)$ CFLs \citep{liu+weiner:1973}, this means that RNS-RNNs are considerably more powerful than nondeterministic PDAs.

\section{Non-context-free languages}
\label{sec:non-cfls}

We now explore the ability of stack RNNs to recognize non-context-free phenomena with a language modeling task on several non-CFLs. Each non-CFL, which we describe below, can be recognized by a real-time three-stack automaton (see \cref{sec:non-cfl-language-details} for details). We also include additional non-CFLs in \cref{sec:non-cfl-language-details}.
\begin{description}
\item[$\bm{\MarkedReverseAndCopy{}}$] The language $\{ w \sym{\#} \reverse{w} \sym{\#} w \mid w \in \{\sym{0}, \sym{1}\}^\ast \}$.
\item[$\bm{\MarkedCopy{}}$] The language $\{ w \sym{\#} w \mid w \in \{\sym{0}, \sym{1}\}^\ast \}$.
\item[$\bm{\UnmarkedCopyDifferentAlphabets{}}$] The language $\{ w w' \mid w \in \{\sym{0}, \sym{1}\}^\ast \text{ and } w' = \phi(w) \}$, where $\phi$ is the homomorphism $\phi(\sym{0}) = \sym{2}$, $\phi(\sym{1}) = \sym{3}$.
\item[$\bm{\UnmarkedCopy{}}$] The language $\{ ww \mid w \in \{ \sym{0}, \sym{1} \}^\ast \}$.
\end{description}
The above languages all include patterns like $w\cdots w$, which are known in linguistics as \emph{cross-serial dependencies}. In Swiss German \citep{shieber:1985}, the two $w$'s are distinguished by part-of-speech (a sequence of nouns and verbs, respectively), analogous to $\UnmarkedCopyDifferentAlphabets$. In Bambara \citep{culy:1985}, the two $w$'s are the same, but separated by a morpheme \emph{o}, analogous to $\MarkedCopy$. 

We follow our previous experimental framework \citep{dusell+:2022}. If $L$ is a language, let $L_\ell$ be the set of all strings in $L$ of length $\ell$. To sample a string $w \in L$, we first uniformly sample a length $\ell$ from $[40, 80]$, then sample uniformly from $L_\ell$ (we avoid sampling lengths for which $L_\ell$ is empty). So, the distribution from which $w$ is sampled is
\begin{equation*}
    p_L(w) = \frac{1}{\big|\{ \ell \in [40, 80] \mid L_\ell \not= \emptyset \}\big|} \frac{1}{|L_{|w|}|}.
\end{equation*}

We require models to predict an \texttt{EOS} symbol at the end of each string, so each language model $M$ defines a probability distribution $p_M(w)$. Let the per-symbol cross-entropy of a probability distribution $p$ on a set of strings $S$, measured in nats, be defined as
\begin{equation*}
    H(S, p) = \frac{-\sum_{w \in S} \log p(w)}{\sum_{w \in S} (|w|+1)}.
\end{equation*}
The $+1$ in the denominator accounts for the fact that the model must predict \texttt{EOS}. Since we know the exact distribution from which the data is sampled (for each non-CFL above, $|L_{|w|}|$ can be computed directly from $|w|$), we can evaluate model $M$ by measuring the \textit{cross-entropy difference} between the learned and true distributions, or $H(S, p_M) - H(S, p_L)$. Lower is better, and 0 is optimal.

We compare five architectures, each of which consists of an LSTM controller connected to a different type of data structure. We include a bare LSTM baseline (``LSTM''). We also include a model that pushes learned vectors of size 10 to a superposition stack (``Sup. 10''), and another that pushes the controller's hidden state (``Sup. h''). Since each of these languages can be recognized by a three-stack automaton, we also tested a model that is connected to three independent instances of the superposition stack, each of which has vectors of size 3 (``Sup. 3-3-3''). Finally, we include an RNS-RNN with $|Q| = 3$ and $|\Gamma| = 3$, where $|Q| = 3$ is sufficient for the model to be able to simulate at least three different stacks. In all cases, the LSTM controller has one layer and 20 hidden units. We encoded all input symbols as one-hot vectors.

Before each training run, we sampled a training set of 10,000 examples and a validation set of 1,000 examples from $p_L$. For each language and architecture, we trained 10 models and report results for the model with the lowest cross-entropy difference on the validation set. For each language, we sampled a single test set that was reused across all training runs. Examples in the test set vary in length from 40 to 100, with 100 examples sampled uniformly from $L_\ell$ for each length $\ell$. Additional training details can be found in \cref{sec:non-cfl-extra-details}.

We show the cross-entropy difference on the validation and test sets in \cref{fig:non-cfls}. We show results for additional non-CFLs in \cref{sec:non-cfl-extra-results}. Strings in \MarkedReverseAndCopy{} contain two hints to facilitate learning: explicit boundaries for $w$, and extra timesteps in the middle, which simplify the task of transferring symbols between stacks (for details, compare the solutions for \MarkedReverseAndCopy{} and \MarkedCopy{} described in \cref{sec:non-cfl-language-details}). Only models that can simulate multiple stacks, Sup. 3-3-3 and RNS 3-3, achieved optimal cross-entropy here, although RNS 3-3 did not generalize well on lengths not seen in training. Only RNS 3-3 effectively learned \MarkedCopy{} and \UnmarkedCopyDifferentAlphabets{}. This suggests that although multiple deterministic stacks are enough to learn to copy a string given enough hints in the input (explicit boundaries, extra timesteps for computation), only the nondeterministic stack succeeds when the number of hints is reduced (no extra timesteps). No stack models learned to copy a string without explicit boundaries (\UnmarkedCopy{}), suggesting a nondeterministic multi-stack model (as opposed to one that uses nondeterminism to simulate multiple stacks) may be needed for such patterns.

\begin{figure*}[t]
\pgfplotsset{
  every axis/.style={
    width=3.3in,
    height=1.7in
  },
  title style={yshift=-4.5ex},
}
    \centering
\begin{tabular}{@{}l@{\hspace{1em}}l@{}}    
    \multicolumn{2}{c}{
    \begin{tikzpicture}
\definecolor{color0}{rgb}{0.12156862745098,0.466666666666667,0.705882352941177}
\definecolor{color1}{rgb}{1,0.498039215686275,0.0549019607843137}
\definecolor{color2}{rgb}{0.172549019607843,0.627450980392157,0.172549019607843}
\definecolor{color3}{rgb}{0.83921568627451,0.152941176470588,0.156862745098039}
\definecolor{color4}{rgb}{0.580392156862745,0.403921568627451,0.741176470588235}
\begin{axis}[
  hide axis,
  height=0.7in,
  legend style={
    draw=none,
    /tikz/every even column/.append style={column sep=0.4cm}
  },
  legend columns=-1,
  xmin=0,
  xmax=1,
  ymin=0,
  ymax=1
]
\addlegendimage{color0}
\addlegendentry{LSTM}
\addlegendimage{color1}
\addlegendentry{Sup. 10}
\addlegendimage{color2}
\addlegendentry{Sup. 3-3-3}
\addlegendimage{color3}
\addlegendentry{Sup. h}
\addlegendimage{color4}
\addlegendentry{RNS 3-3}
\end{axis}
\end{tikzpicture}
    } \\[-1ex]
    \input{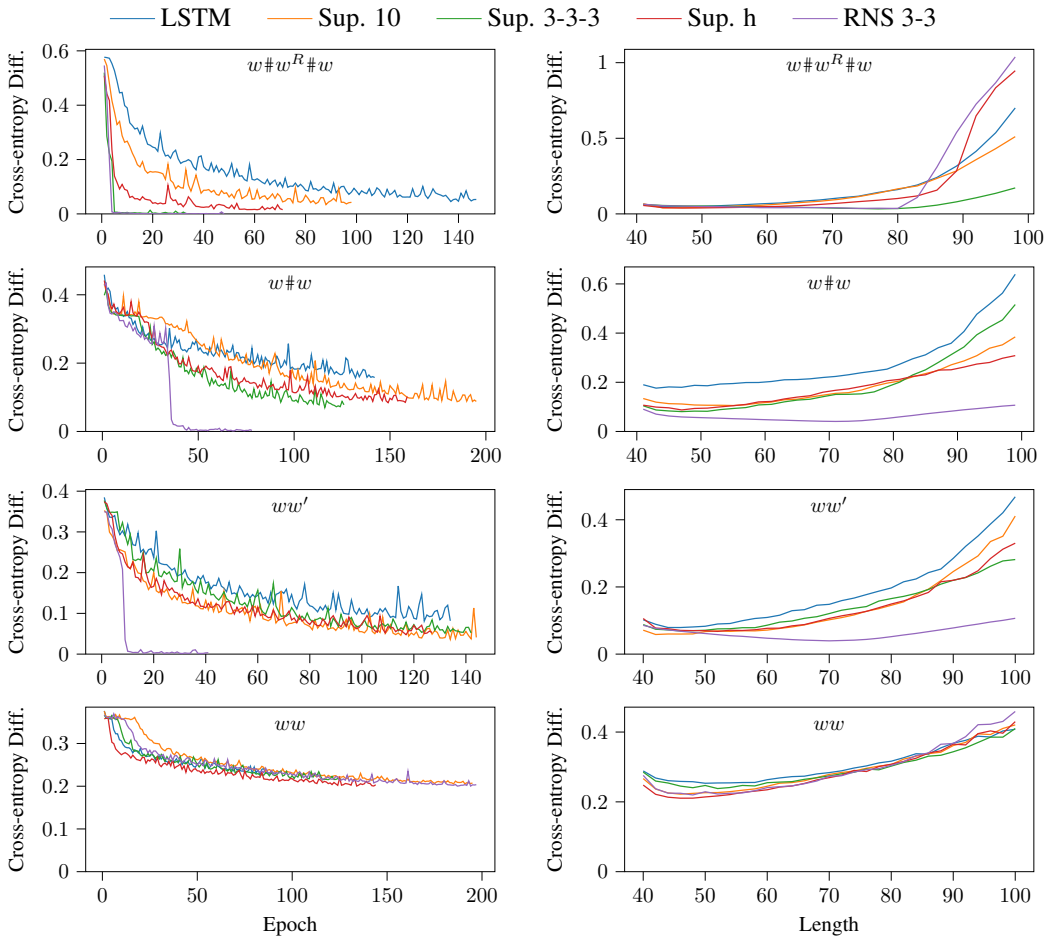}
\end{tabular}
\caption{Performance on non-context-free languages. Cross-entropy difference in nats, on the validation set by epoch (left), and on the test set by string length (right). Each line is the best of 10 runs, selected by validation perfomance. On \MarkedReverseAndCopy{}, which has both explicit boundaries and extra timesteps in the middle, only multi-stack models (Sup. 3-3-3 and RNS 3-3) achieved optimal cross-entropy. On \MarkedCopy{} and \UnmarkedCopyDifferentAlphabets{}, which have no extra timesteps, only RNS 3-3 did.}
\label{fig:non-cfls}
\end{figure*}

\section{Capacity}
\label{sec:capacity}

In this section, we examine how much information each model can transmit through its stack. Consider the language $\{w\sym{\#}\reverse{w} \mid w \in \Sigma^\ast \}$. It can be recognized by a real-time PDA with $|\Gamma| = 3$ when $|\Sigma| = 2$, but not when $|\Sigma| > 2$, as there is always a sufficiently long $w$ such that there are more possible $w$'s than possible PDA configurations upon reading the $\sym{\#}$. Similarly, because all the neural stack models we consider here are also real-time, we expect that they will be unable to model context-free languages with sufficiently large alphabets. This is especially relevant to natural languages, which have very large vocabulary sizes. 

Neural networks can encode large sets of distinct types in compact vector representations; in fact, \citet{schwartz+:2018} and \citet{peng+:2018} showed connections between the vector representations in RNNs and the states of WFAs. However, since the RNS-RNN simulates a \emph{discrete} stack, it might struggle on tasks that require it to store strings over alphabets with sizes greater than $|\Gamma|$. On the other hand, a model that uses a stack of \emph{vectors}, like the superposition stack, might model such languages more easily by representing each symbol type as a different cluster of points in a vector space. Here, we make the surprising finding that the RNS-RNN can vastly outperform the superposition stack even for large alphabets, though not always. In addition, to see if we can combine the benefits of nondeterminism with the benefits of vector representations, we propose a new variant of the RNS-RNN that models a PDA with a stack of discrete symbols augmented with vectors.

\subsection{Vector RNS-RNN}

The RNS-RNN is computationally expensive for large $|\Gamma|$. To address this shortcoming, we propose a new variant, the Vector RNS-RNN (VRNS-RNN), that uses a stack whose elements are symbols drawn from $\Gamma$ and augmented with \emph{vectors} of size $m$, which it can use to encode large alphabets. Its time and space complexity scale only linearly with $m$. Now, each PDA run involves a stack of elements in $\Gamma \times \mathbb{R}^m$. We assume the initial stack consists of the element $(\bot, \mathbf{v}_0)$, where $\mathbf{v}_0 = \sigmoid(\mathbf{w}_{\mathrm{v}})$, and $\mathbf{w}_\mathrm{v}$ is a learned parameter. The stack operations now have the following semantics:
\begin{description}
\item[Push $q, x \rightarrow r, y$] If $q$ is the current state and $(x, \mathbf{u})$ is on top of the stack, go to state $r$ and push $(y, \mathbf{v}_t)$ with weight $\transtensor tqxr{xy}$, where $\mathbf{v}_t = \sigma(\Affine{v}{\mathbf{h}_t})$.
\item[Replace $q, x \rightarrow r, y$] If $q$ is the current state and $(x, \mathbf{u})$ is on top of the stack, go to state $r$ and replace $(x, \mathbf{u})$ with $(y, \mathbf{u})$ with weight $\transtensor tqxry$. Note that we do \emph{not} replace $\mathbf{u}$ with $\mathbf{v}_t$; we replace the discrete symbol only and keep the vector the same. When $x = y$, this is a no-op.
\item[Pop $q, x \rightarrow r$] If $q$ is the current state and $(x, \mathbf{u})$ is on top of the stack, go to state $r$ and remove $(x, \mathbf{u})$ with weight $\transtensor tqxr{\varepsilon}$, uncovering the stack element beneath.
\end{description}
Let $\mathbf{v}(\pi)$ denote the top stack vector at the end of run $\pi$. The stack reading $\mathbf{r}_t \in \mathbb{R}^{|Q| \cdot |\Gamma| \cdot m}$ now includes, for each $(r, y) \in Q \times \Gamma$, an interpolation of $\mathbf{v}(\pi)$ for every run $\pi \leadsto t, r, y$, normalized by the weight of all runs.
\begin{equation}
\mathbf{r}_t[(r, y, j)] = \frac{\sum_{\pi \leadsto t, r, y} \psi(\pi) \; \mathbf{v}(\pi)[j]}{\sum_{r', y'} \sum_{\pi \leadsto t, r', y'} \psi(\pi)}
\label{eq:vrns-reading-inefficient}
\end{equation}
We compute the denominator using $\gamma$ and $\alpha$ as before. To compute the numerator, we compute a new tensor $\bm{\zeta}$ which stores $\psi(\pi) \; \mathbf{v}(\pi)$. For $1 \leq t \leq n-1$ and $-1 \leq i \leq t-1$,
\begin{align}
    \bm{\zeta}[-1 \rightarrow 0][q, x \rightarrow r, y] &= \indicator{q = q_0 \wedge x = \bot \wedge r = q_0 \wedge y = \bot} \; \mathbf{v}_0 && \text{init.\footnotemark} \label{eq:vrns-zeta-bottom} \\
    \bm{\zeta}[i \rightarrow t][q, x \rightarrow r, y] &=
        \indicator{i=\tprev} \; \transtensor tqxr{xy} \; \mathbf{v}_t && \text{push} \label{eq:vrns-zeta} \\
                      & + \sum_{s,z} \bm{\zeta}[i \rightarrow \tprev][q, x \rightarrow s, z] \; \transtensor tszry && \text{repl.} \notag\\
                      & + \sum_{k=i+1}^{t-2} \sum_{u} \bm{\zeta}[i \rightarrow k][q, x \rightarrow u, y] \; \gamma'[k \rightarrow t][u, y \rightarrow r]. && \text{pop} \notag
\end{align}
\footnotetext{Our code and experiments implement $\bm{\zeta}[-1 \rightarrow 0][q, x \rightarrow r, y] = \mathbf{v}_0$ instead due to a mistake found late in the publication process. Consequently, in \cref{eq:vrns-reading}, runs can start with \emph{any} $(r, y) \in Q \times \Gamma$ in the numerator, but only $(q_0, \bot)$ in the denominator. Empirically the VRNS-RNN still appears to work as expected.}
We compute $\alpha$ as before, and we compute the normalized stack reading $\mathbf{r}_t$ as follows.
\begin{align}
    \mathbf{r}_t[(r, y, j)] = \frac{ \bm{\eta}_t[r, y][j] }{ \sum_{r', y'} \alpha[t][r', y'] }
    &&
    \bm{\eta}_t[r, y] = \sum_{i=-1}^{t-1} \sum_{q,x} \alpha[i][q, x] \, \bm{\zeta}[i \rightarrow t][q, x \rightarrow r, y]
    \label{eq:vrns-reading}
\end{align}

\subsection{Experiments}

We evaluate models using cross-entropy difference as in \cref{sec:non-cfls}. We express each language $L$ as a PCFG, using the same PCFG definitions as in prior work \citep{dusell+:2020}, but modified to include~$k$ symbol types. In order to sample from and compute $p_L(w)$, we used the same sampling and parsing techniques as before \citep{dusell+:2020}. We tested the information capacity of each model on two DCFLs, varying their alphabet size $k$ from very small to very large.
\begin{description}
    \item[$\bm{\MarkedReversal{}}$] The language $\{ w \sym{\#} \reverse{w} \mid w \in \{ \sym{0}, \sym{1}, \cdots, k-1 \}^\ast \}$.
    \item[Dyck] The language of strings over the alphabet $\{ \sym{(}_1, \sym{)}_1, \sym{(}_2, \sym{)}_2, \cdots, \sym{(}_k, \sym{)}_k \}$ where all brackets are properly balanced and nested in pairs of $\sym{(}_i$ and $\sym{)}_i$.
\end{description}

We compare four types of architecture, including an LSTM baseline (``LSTM''), and a superposition stack that pushes learned vectors of size 3 (``Sup. 3''). We use the notation ``RNS $|Q|$-$|\Gamma|$'' for RNS-RNNs, and ``VRNS $|Q|$-$|\Gamma|$-$m$'' for VRNS-RNNs. All details of the controller and training procedure are the same as in \cref{sec:non-cfls}. We varied the alphabet size $k$ from 2 to 200 in increments of 40. For each task, architecture, and alphabet size, we ran 10 random restarts.

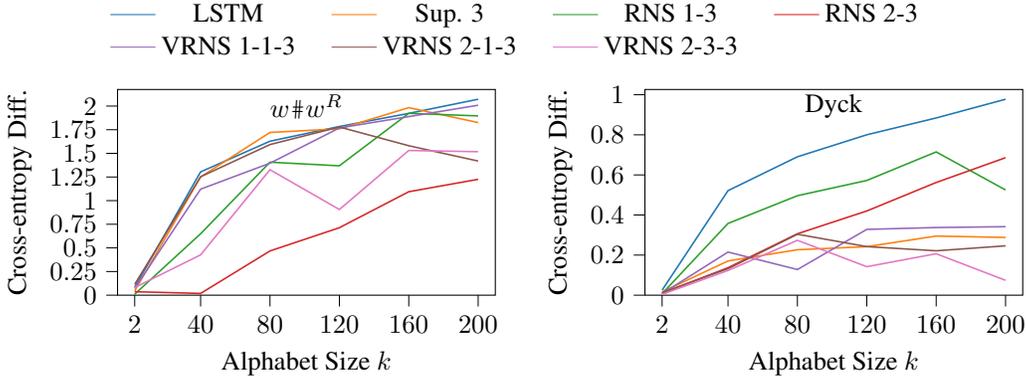
\begin{figure*}[t]
\pgfplotsset{
  every axis/.style={
    height=1.7in,
    width=2.6in
  },
  title style={yshift=-4.5ex},
}
    \centering
\begin{tabular}{@{}l@{\hspace{1em}}l@{}}    
    \multicolumn{2}{c}{
    \begin{tikzpicture}
\definecolor{color0}{rgb}{0.12156862745098,0.466666666666667,0.705882352941177}
\definecolor{color1}{rgb}{1,0.498039215686275,0.0549019607843137}
\definecolor{color2}{rgb}{0.172549019607843,0.627450980392157,0.172549019607843}
\definecolor{color3}{rgb}{0.83921568627451,0.152941176470588,0.156862745098039}
\definecolor{color4}{rgb}{0.580392156862745,0.403921568627451,0.741176470588235}
\definecolor{color5}{rgb}{0.549019607843137,0.337254901960784,0.294117647058824}
\definecolor{color6}{rgb}{0.890196078431372,0.466666666666667,0.76078431372549}
\begin{axis}[
  hide axis,
  height=0.7in,
  legend style={
    draw=none,
    /tikz/every even column/.append style={column sep=0.4cm}
  },
  legend columns=4,
  xmin=0,
  xmax=1,
  ymin=0,
  ymax=1
]
\addlegendimage{color0}
\addlegendentry{LSTM}
\addlegendimage{color1}
\addlegendentry{Sup. 3}
\addlegendimage{color2}
\addlegendentry{RNS 1-3}
\addlegendimage{color3}
\addlegendentry{RNS 2-3}
\addlegendimage{color4}
\addlegendentry{VRNS 1-1-3}
\addlegendimage{color5}
\addlegendentry{VRNS 2-1-3}
\addlegendimage{color6}
\addlegendentry{VRNS 2-3-3}
\end{axis}
\end{tikzpicture}
    } \\
\begin{tikzpicture}

\definecolor{color0}{rgb}{0.12156862745098,0.466666666666667,0.705882352941177}
\definecolor{color1}{rgb}{1,0.498039215686275,0.0549019607843137}
\definecolor{color2}{rgb}{0.172549019607843,0.627450980392157,0.172549019607843}
\definecolor{color3}{rgb}{0.83921568627451,0.152941176470588,0.156862745098039}
\definecolor{color4}{rgb}{0.580392156862745,0.403921568627451,0.741176470588235}
\definecolor{color5}{rgb}{0.549019607843137,0.337254901960784,0.294117647058824}
\definecolor{color6}{rgb}{0.890196078431372,0.466666666666667,0.76078431372549}

\begin{axis}[
minor xtick={},
minor ytick={},
tick align=outside,
tick pos=left,
title={\MarkedReversal{}},
x grid style={white!69.0196078431373!black},
xlabel={Alphabet Size \(\displaystyle k\)},
xmin=-7.9, xmax=209.9,
xtick style={color=black},
xtick={2,40,80,120,160,200},
y grid style={white!69.0196078431373!black},
ylabel={Cross-entropy Diff.},
ymin=0, ymax=2.17536413488503,
ytick style={color=black},
ytick={0,0.25,0.5,0.75,1,1.25,1.5,1.75,2,2.25}
]
\addplot [semithick, color0]
table {%
2 0.0844621502446901
40 1.30403231897221
80 1.62824465191144
120 1.78393472841537
160 1.92263385037656
200 2.07226766676239
};
\addplot [semithick, color1]
table {%
2 0.0452832781430136
40 1.25081753307636
80 1.72107683082494
120 1.75945476870676
160 1.98337317638042
200 1.82560822355089
};
\addplot [semithick, color2]
table {%
2 0.0103383043095524
40 0.649685872685266
80 1.40660017803251
120 1.36824696873149
160 1.92329461780351
200 1.89585358200635
};
\addplot [semithick, color3]
table {%
2 0.0367293659653556
40 0.0183580021576418
80 0.467725639193064
120 0.713481686213718
160 1.09398545041145
200 1.22543610134087
};
\addplot [semithick, color4]
table {%
2 0.0708443296676283
40 1.12133464329931
80 1.39604232160344
120 1.76861138643572
160 1.88847699006814
200 2.00875890641409
};
\addplot [semithick, color5]
table {%
2 0.114722633684762
40 1.25221788711567
80 1.59278534253012
120 1.77864232881105
160 1.58061825033271
200 1.41862347438357
};
\addplot [semithick, color6]
table {%
2 0.0804758942797908
40 0.426488398623111
80 1.32692700955627
120 0.906087794041393
160 1.53037083906924
200 1.51739776271667
};
\end{axis}

\end{tikzpicture} &
\begin{tikzpicture}

\definecolor{color0}{rgb}{0.12156862745098,0.466666666666667,0.705882352941177}
\definecolor{color1}{rgb}{1,0.498039215686275,0.0549019607843137}
\definecolor{color2}{rgb}{0.172549019607843,0.627450980392157,0.172549019607843}
\definecolor{color3}{rgb}{0.83921568627451,0.152941176470588,0.156862745098039}
\definecolor{color4}{rgb}{0.580392156862745,0.403921568627451,0.741176470588235}
\definecolor{color5}{rgb}{0.549019607843137,0.337254901960784,0.294117647058824}
\definecolor{color6}{rgb}{0.890196078431372,0.466666666666667,0.76078431372549}

\begin{axis}[
minor xtick={},
minor ytick={},
tick align=outside,
tick pos=left,
title={Dyck},
x grid style={white!69.0196078431373!black},
xlabel={Alphabet Size \(\displaystyle k\)},
xmin=-7.9, xmax=209.9,
xtick style={color=black},
xtick={2,40,80,120,160,200},
y grid style={white!69.0196078431373!black},
ylabel={Cross-entropy Diff.},
ymin=0, ymax=1.02617976589939,
ytick style={color=black},
ytick={0,0.2,0.4,0.6,0.8,1,1.2}
]
\addplot [semithick, color0]
table {%
2 0.0251192774203126
40 0.521498471569604
80 0.690431971811708
120 0.800196256378832
160 0.884151761733237
200 0.977504761253839
};
\addplot [semithick, color1]
table {%
2 0.0163972533649358
40 0.170639027804095
80 0.226639232058391
120 0.242026596592649
160 0.294878143329298
200 0.288001127444746
};
\addplot [semithick, color2]
table {%
2 0.0076437443249564
40 0.357980366832814
80 0.496181938748609
120 0.572294280652724
160 0.714699105876919
200 0.524920598674954
};
\addplot [semithick, color3]
table {%
2 0.0100370889704429
40 0.136959966617515
80 0.306345981429443
120 0.420040179000139
160 0.562125958548897
200 0.6858814832448
};
\addplot [semithick, color4]
table {%
2 0.00792530298043484
40 0.215065640562366
80 0.127957233516401
120 0.328233273758757
160 0.337439884499223
200 0.34172277654537
};
\addplot [semithick, color5]
table {%
2 0.0122961105064872
40 0.132126246885118
80 0.303970081097207
120 0.242305153965907
160 0.221387377957823
200 0.246146494311592
};
\addplot [semithick, color6]
table {%
2 0.00400466834276353
40 0.123154737135094
80 0.273665419037822
120 0.142081953095784
160 0.206719532891375
200 0.0740158407162075
};
\end{axis}

\end{tikzpicture}
\end{tabular}
    \caption{Mean cross-entropy difference on the validation set vs.~input alphabet size. Contrary to expectation, RNS 2-3, which models a discrete stack of only 3 symbol types, learns to solve \MarkedReversal{} with 200 symbol types more reliably than models with stacks of vectors. On the more complicated Dyck language, vector stacks perform best, with our newly proposed VRNS-RNN performing best.}
    \label{fig:capacity}
\end{figure*}

\begin{figure*}[t]
\pgfplotsset{
  every axis/.style={
    height=1.6in,
    width=3.5in
  }
}
    \centering
    \input{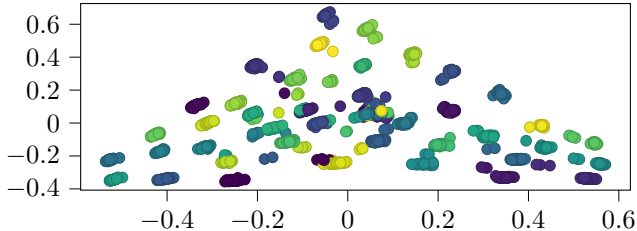}
    \caption{%
    When RNS 2-3 is run on \MarkedReversal{} with 40 symbol types, the stack readings are as visualized above. The readings are 6-dimensional vectors, projected down to 2 dimensions using PCA. The color of each point represents the top stack symbol. Points corresponding to the same symbol type cluster together, indicating the RNS-RNN has learned to encode symbols as points in the 5-dimensional simplex. The disorganized points in the middle are from the first and last timesteps of the second half of the string, which appear to be irrelevant for prediction.}
    \label{fig:reading-pca}
\end{figure*}

In \cref{fig:capacity}, for each task, we show the mean cross-entropy difference on the validation set as a function of alphabet size; we provide plots of the best performance in \cref{sec:capacity-extra-results}. On \MarkedReversal{}, the single-state RNS 1-3, and even VRNS 1-1-3 and Sup. 3, struggled for large $k$. Only the multi-state models RNS 2-3, VRNS 2-1-3, and VRNS 2-3-3 show a clear advantage over the LSTM. Surprisingly, RNS 2-3, which models a discrete stack alphabet of only size 3, attained the best performance on large alphabets; in \cref{fig:capacity-best}, it is the only model capable of achieving optimal cross-entropy on all alphabet sizes. On the Dyck language, a more complicated DCFL, the model rankings are as expected: vector stacks (Sup. 3 and VRNS) performed best, with the largest VRNS model performing best. RNS-RNNs still show a clear advantage over the LSTM, but not as much as vector stack RNNs.

If RNS 2-3 has only 3 symbol types at its disposal, how can it succeed on \MarkedReversal{} for large $k$? Recall that $\mathbf{r}_t$ is a vector that represents a probability distribution over $Q \times \Gamma$. Perhaps the RNS-RNN, via $\mathbf{r}_t$, represents symbol types as different clusters of points in $\mathbb{R}^{|Q| \cdot |\Gamma|}$. To test this hypothesis, we selected the RNS 2-3 model with the best validation performance on \MarkedReversal{} for $k = 40$ and evaluated it on 100 samples drawn from $p_L$. For each symbol between $\sym{\#}$ and \texttt{EOS}, we extracted the stack reading vector computed just prior to predicting that symbol. Aggregating over all 100 samples, we reduced the stack readings to 2 dimensions using principal component analysis. We plot them in \cref{fig:reading-pca}, labeling each point according to the symbol type to be predicted just after the corresponding stack reading. Indeed, we see that stack readings corresponding to the same symbol cluster together, suggesting that the model is orchestrating the weights of different runs in a way that causes the stack reading to encode different symbol types as points in the 5-dimensional simplex. We show heatmaps of actual reading vectors in \cref{sec:reading-heatmap}.

\section{Natural language modeling}
\label{sec:ptb}

We now examine how stack RNNs fare on natural language modeling, as the combination of nondeterminism and vector representations in the VRNS-RNN may prove beneficial. Following our prior work \citep{dusell+:2022}, we report perplexity on the Penn Treebank as preprocessed by \citet{mikolov+:2011}. We used the same LSTM and superposition stack baselines, and various sizes of RNS-RNN and VRNS-RNN. The controller has one layer and, unless otherwise noted, 256 hidden units. For each architecture, we trained 10 random restarts and report results for the model with the best validation perplexity. \Cref{sec:ptb-extra-details} has additional details.

We show results in \cref{tab:ptb}. Most stack RNNs achieved better test perplexity than the LSTM baseline. The best models are those that simulate more nondeterminism (VRNS when $|Q| = 3$ and $|\Gamma| = 3$, and RNS when $|Q| = 4$ and $|\Gamma| = 5$). Although the superposition stack RNNs outperformed the LSTM baseline, it is the combination of both nondeterminism and vector embeddings (VRNS 3-3-5) that achieved the best performance, combining the ability to process syntax nondeterministically with the ability to pack lexical information into a vector space on the stack.

\begin{table}
    \centering
    \caption{Validation and test perplexity on the Penn Treebank of the best of 10 random restarts for each architecture. The model with the best test perplexity is our new VRNS-RNN when it combines a modest amount of nondeterminism (3 states and 3 stack symbols) with vectors of size~5.}
    \small
\begin{tabular}{@{}lcc@{}}
\toprule
Model & Val. $\downarrow$ & Test $\downarrow$ \\
\midrule
LSTM, 256 units & 129.99 & 125.90 \\
Sup. (push hidden), 247 units & 124.99 & 121.05 \\
Sup. (push learned), $|\mathbf{v}_t| = 22$ & 125.68 & 120.74 \\
RNS 1-29 & 131.17 & 128.11 \\
RNS 2-13 & 128.97 & 122.76 \\
RNS 4-5 & 126.06 & 120.19 \\
VRNS 1-1-256 & 130.60 & 126.70 \\
VRNS 1-1-32 & \textbf{124.49} & 120.45 \\
VRNS 1-5-20 & 128.35 & 124.63 \\
VRNS 2-3-10 & 129.30 & 124.03 \\
VRNS 3-3-5 & 124.71 & \textbf{120.12} \\
\bottomrule
\end{tabular}

    \label{tab:ptb}
\end{table}

\section{Conclusion}

We showed that the RNS-RNN \citep{dusell+:2022} can recognize all CFLs and a large class of non-CFLs, and it can even learn cross-serial dependencies provided the boundary is explicitly marked, unlike a deterministic multi-stack architecture. We also showed that the RNS-RNN can far exceed the amount of information it seemingly should be able to encode in its stack given its finite stack alphabet. Our newly proposed VRNS-RNN combines the benefits of nondeterminism and vector embeddings, and we showed that it has better performance than other stack RNNs on the Dyck language and a natural language modeling benchmark.

\subsubsection*{Reproducibility statement}

To facilitate reproducibility, we have publicly released all code we used to conduct our experiments and generate the figures and tables in this paper. During both development and experimentation, we ran our code in containers to simplify reproducing our software environment. Our code includes the original Docker image definition we used, as well as the exact shell commands we used for each experiment, figure, and table. We have thoroughly documented our experimental settings in \cref{sec:non-cfls,sec:non-cfl-extra-details,sec:capacity,sec:ptb,sec:ptb-extra-details}.

\subsubsection*{Acknowledgments}

This research was supported in part by a Google Faculty Research Award to Chiang.
We would like to thank Darcey Riley and Stephen Bothwell for their comments on an earlier draft of this paper, and the Center for Research Computing at the University of Notre Dame for providing the computing infrastructure for our experiments.

\bibliography{iclr2023_conference}
\bibliographystyle{iclr2023_conference}

\appendix

\section{Changes to the RNS-RNN}
\label{sec:rns-rnn-improvements}

In this section, we describe two minor changes to the original definition of the RNS-RNN that improve its time complexity and expressivity. We first reiterate the original definitions for $\gamma$ and $\alpha$ \citep{dusell+:2022}. For $0 \leq i < t \leq n-1$,
\begin{equation}
\label{eq:rns-rnn-gamma}
\begin{split}
    &\gamma[i \rightarrow t][q, x \rightarrow r, y] = \\
&\qquad \begin{aligned}
    &\indicator{i=\tprev} \; \transtensor tqxr{xy} && \text{push} \\
                      & + \sum_{s,z} \gamma[i \rightarrow \tprev][q, x \rightarrow s, z] \; \transtensor tszry && \text{repl.} \\
                      & + \sum_{k=i+1}^{t-2} \sum_{u} \sum_{s,z}  \gamma[i \rightarrow k][q, x \rightarrow u, y] \; \gamma[k \rightarrow \tprev][u, y \rightarrow s, z] \; \transtensor tszr\varepsilon && \text{pop}
\end{aligned}
\end{split}
\end{equation}
\begin{align}
    \alpha[0][r, y] &= \indicator{r = q_0 \wedge y = \bot} \label{eq:rns-rnn-alpha-init} \\
    \alpha[t][r, y] &= 
    \sum_{i=0}^{t-1} \sum_{q,x} \alpha[i][q, x] \, \gamma[i \rightarrow t][q, x \rightarrow r, y] \qquad (1 \leq t \leq n).
\label{eq:rns-rnn-alpha}
\end{align}

Note that in an RNN where $\mathbf{r}_{t-1}$ is used to compute $\mathbf{h}_t$ and $\mathbf{y}_t$, the last timestep $t = n$ is not needed, so $\Delta[t]$ is only defined for $1 \leq t \leq n-1$, and $\gamma$ and $\alpha$ only need to be computed for $t \leq n-1$.

Now, we describe the two changes we have made to \cref{eq:rns-rnn-gamma,eq:rns-rnn-alpha-init,eq:rns-rnn-alpha} to arrive at \cref{eq:new-rns-rnn-gamma-init,eq:new-rns-rnn-gamma-prime,eq:new-rns-rnn-gamma,eq:new-rns-rnn-alpha-init,eq:new-rns-rnn-alpha}.

\paragraph{Asymptotic speedup} \label{sec:speedup} As can be seen in \cref{eq:rns-rnn-gamma,eq:rns-rnn-alpha-init,eq:rns-rnn-alpha,eq:rns-rnn-reading}, the RNS-RNN's time complexity is $O({|Q|}^4 {|\Gamma|}^3 n^3)$, and its space complexity is $O({|Q|}^2 {|\Gamma|}^2 n^2)$. The computational complexity of the RNS-RNN limits us to relatively small sizes for $Q$ and $\Gamma$. However, it is possible to improve its asymptotic time complexity with respect to $|Q|$ with a simple change: precomputing the product $\gamma[k \rightarrow \tprev][u, y \rightarrow s, z] \; \transtensor tszr\varepsilon$ used in the pop rule in \cref{eq:rns-rnn-gamma}, which we store in a tensor $\gamma'$. This reduces the time complexity of the RNS-RNN to $O({|Q|}^3 {|\Gamma|}^3 n^2 + {|Q|}^3 {|\Gamma|}^2 n^3)$, allowing us to train larger models.

\paragraph{Bottom symbol fix} \label{sec:bottom-symbol-fix} There are some peculiarities of the initial $\bot$ marker that are not described in previous work: (1) the initial instance of $\bot$ at the bottom of the stack can never be popped or replaced, (2) any symbol that sits directly above it cannot be popped (but it can be replaced), and (3) the $\bot$ symbol type can be reused freely elsewhere in the stack. Point (2) is odd because the $\bot$ symbol can never be uncovered again after the first timestep, possibly complicating detection of the bottom of the stack later on. We rectify this by allowing the symbol above the initial $\bot$ to be popped, and allowing the initial $\bot$ symbol to be replaced with a different symbol type at any time. We do this by simulating an extra push action at $t = -1$.

\section{Proofs of language recognition results}

\subsection{Proof of \cref{thm:cfl}}
\label{sec:cfl-proof-details}

In this section, we prove that the restricted form of real-time PDA used in the RNS-RNN can recognize all CFLs. We then show how to convert PDAs in this form to RNS-RNN recognizers (assuming some parameters can have values of $\pm\infty$), proving that RNS-RNNs can recognize all CFLs.

\begin{definition}
A \emph{pushdown automaton} is a tuple $(Q, \Sigma, \Gamma, \delta, q_0, F, \bot)$, where
\begin{itemize}
\item $Q$ is a finite set of states
\item $\Sigma$ is a finite input alphabet
\item $\Gamma$ is a finite stack alphabet
\item $\delta \subseteq Q \times \Gamma \times (\Sigma \cup \varepsilon) \times Q \times \Gamma^*$ is a set of transitions
\item $q_0 \in Q$ is the start state
\item $F \subseteq Q$ is the set of accept states
\item $\bot \in \Gamma$ is the bottom stack symbol.
\end{itemize}
\end{definition}
A PDA always starts in state $q_0$ with a stack consisting of $\bot$, and it accepts its input iff there is a run that terminates in an accept state with $\bot$ on top of the stack.

\begin{definition}
A \emph{restricted} PDA is one whose transitions have one of the following forms, where $q, r \in Q$, $a \in \Sigma$, and $x, y \in \Gamma$:
\begin{align*}
    &\trans q a x r {xy} && \text{push $y$ on top of $x$} \\
    &\trans q a x r y && \text{replace $x$ with $y$} \\
    &\trans q a x r \varepsilon && \text{pop $x$.}
\end{align*}
\end{definition}

The usual construction for removing non-scanning transitions \citep{autebert+:1997} involves converting to Greibach normal form (GNF) and then converting to a PDA. However, this construction produces transitions of the form $\trans{q}{a}{x}{r}{zy}$ where $x \not= z$, which our restricted form does not allow. Simulating such transitions in a restricted PDA presents a challenge because it requires performing a replace and then a push while scanning only one symbol. To simulate such transitions, we need to use a modified GNF, defined below.

\begin{lemma} \label{thm:2gnf}
For any CFG $G$, there is a CFG equivalent to $G$ that has the following form (called \emph{2--Greibach normal form}):
\begin{itemize}
\item The start symbol $S$ does not appear on any right-hand side.
\item Every rule has one of the following forms:
\begin{align*}
S &\rightarrow \varepsilon \\
A &\rightarrow a \\
A &\rightarrow abB_1 \cdots B_p & p \ge 0.
\end{align*}
\end{itemize}
\end{lemma}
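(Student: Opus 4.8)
The plan is to start from an arbitrary CFG $G$ and pass through a sequence of normal-form conversions, keeping track of the shape of the right-hand sides. First I would put $G$ into Chomsky normal form (CNF), so every rule is $A \rightarrow BC$, $A \rightarrow a$, or $S \rightarrow \varepsilon$, with $S$ not appearing on any right-hand side. From there the standard route to ordinary GNF is to order the nonterminals $A_1, \ldots, A_m$, eliminate rules $A_i \rightarrow A_j \gamma$ with $j \le i$ by substitution, remove left recursion by introducing new nonterminals, and then back-substitute so that every rule begins with a terminal: $A \rightarrow a B_1 \cdots B_k$. This is the classical construction (see e.g.\ the reference to \citet{autebert+:1997} in the main text); I would invoke it rather than reprove it. The one feature I need beyond plain GNF is that each right-hand side, after the leading terminal, either is empty, is a single nonterminal, or \emph{starts with a second terminal}.

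The key step is therefore a further transformation of a GNF grammar into the desired form. Given a GNF rule $A \rightarrow a B_1 \cdots B_k$ with $k \ge 1$, I would introduce a fresh nonterminal $T_b$ for each terminal $b$ of the grammar, with the single rule $T_b \rightarrow b$, and rewrite the rule as $A \rightarrow a\, b'\, B_2 \cdots B_k$ is \emph{not} quite right since $B_1$ need not be a terminal --- instead the clean move is: replace $A \rightarrow a B_1 \cdots B_k$ by $A \rightarrow a\, T'\, C\, B_2 \cdots B_k$ only when we can expose a terminal in second position. Concretely, since $B_1$ is a nonterminal in GNF, every rule for $B_1$ has the form $B_1 \rightarrow b \beta$; I would substitute each such production into the rule for $A$, obtaining $A \rightarrow a b \beta B_2 \cdots B_k$, which now has two leading terminals followed by a (possibly empty) string of nonterminals --- exactly the form $A \rightarrow ab B_1 \cdots B_p$. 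Rules $A \rightarrow a$ and $S \rightarrow \varepsilon$ are already in the target form and are left untouched. Finally I would verify that the new grammar generates the same language: the substitution of $B_1$'s productions is a standard language-preserving operation, and no new derivations are introduced because the fresh behavior is forced.

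The main obstacle I anticipate is the interaction with nullable and unit-producing nonterminals: if some $B_1$ can derive $\varepsilon$ or a single terminal, the substitution step can collapse $A \rightarrow a B_1 \cdots B_k$ down to a rule with fewer than two terminals, or even to $A \rightarrow a$, and I must make sure every resulting rule still matches one of the three allowed templates. Working from CNF/GNF (where $\varepsilon$-rules are confined to $S \rightarrow \varepsilon$ and unit rules have been eliminated) controls this, but I would need to state that reduction carefully and check the edge cases $k = 1$ and the reappearance of $S$. A secondary, purely bookkeeping point is ensuring $S$ does not recur on any right-hand side after the substitutions; this is handled, as in CNF, by renaming the start symbol at the outset if necessary. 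Once these checks are in place, the chain CFG $\to$ CNF $\to$ GNF $\to$ 2-GNF gives the claimed equivalent grammar, and (as remarked in the main text) a 2-GNF grammar converts directly into a PDA whose transitions are in restricted form, which is what \cref{thm:cfl} needs.
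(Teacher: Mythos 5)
Your proposal is correct and takes essentially the same route as the paper: convert to GNF and then, for each rule $A \rightarrow a B_1 \cdots B_k$ with $k \ge 1$, substitute every GNF production $B_1 \rightarrow b\beta$ to expose a second leading terminal, discarding the original rule. The detour through fresh nonterminals $T_b$ and the worry about nullable/unit symbols are unnecessary once you are in GNF (as you yourself note), so the final argument reduces to exactly the paper's substitution step.
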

\begin{proof}
Convert $G$ to Greibach normal form. Then for every rule $A \rightarrow aA_1 \cdots A_m$ and every rule $A_1 \rightarrow b B_1 \cdots B_\ell$, substitute the second rule into the first to obtain $A \rightarrow ab B_1 \cdots B_\ell A_2 \cdots A_m$. Then discard the first rule.
\end{proof}

\begin{lemma}
For any CFG $G$ in 2-GNF, there is a PDA equivalent to $G$ whose transitions all have one of the forms:
\begin{equation}
\begin{aligned}
    &\trans q a x r {xy_1 \cdots y_k} \\
    &\trans q a x r y \\
    &\trans q a x r \varepsilon.
\end{aligned}
\label{eq:almost-restricted-pda}
\end{equation}
\end{lemma}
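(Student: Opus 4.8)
The plan is to build the classical top-down (predictive) PDA for a Greibach-type grammar, with two small modifications. Throughout, the PDA's stack above the bottom marker $\bot$ will store, read from top to bottom, the reverse $\reverse{\alpha}$ of the nonterminal suffix $\alpha$ of a leftmost sentential form $u\alpha$ (with $S\Rightarrow^*_{\mathrm{lm}} u\alpha$ and $\alpha$ a string of nonterminals), and the PDA will sit in a distinguished \emph{expanding} state $q_{\mathrm{exp}}$ exactly when $u$ equals the input prefix consumed so far. The states are $q_0$ (start), $q_{\mathrm{exp}}$, and one auxiliary state $q_\rho$ for each rule $\rho$ of the long form $A\to abB_1\cdots B_p$ (including $S$-rules of that shape); the accept states are $\{q_{\mathrm{exp}}\}$, plus $q_0$ if $S\to\varepsilon$ is a rule of $G$. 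Since every $2$-GNF rule consumes at least one terminal, the PDA is real-time.

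\textbf{The transitions.} Next I would write down the transitions. Expansion of $A\to a$ with $A$ on top is just $\trans{q_{\mathrm{exp}}}{a}{A}{q_{\mathrm{exp}}}{\varepsilon}$ (pop $A$). For a long rule $\rho\colon A\to abB_1\cdots B_p$ with $p\ge 1$, the naive move would be to replace $A$ by $B_1\cdots B_p$ in one step, but replacement in the allowed forms installs only a single symbol, so I spend the two leading terminals on it: reading $a$, $\trans{q_{\mathrm{exp}}}{a}{A}{q_\rho}{B_p}$ \emph{replaces} $A$ by the \emph{last} nonterminal $B_p$; reading $b$, $\trans{q_\rho}{b}{B_p}{q_{\mathrm{exp}}}{B_p B_{p-1}\cdots B_1}$ keeps $B_p$ and pushes $B_{p-1},\dots,B_1$ above it, leaving $B_1$ exposed. (When $p=1$ the second transition degenerates to the no-op $\trans{q_\rho}{b}{B_1}{q_{\mathrm{exp}}}{B_1}$; when $p=0$, i.e.\ $A\to ab$, I instead use $\trans{q_{\mathrm{exp}}}{a}{A}{q_\rho}{A}$ followed by $\trans{q_\rho}{b}{A}{q_{\mathrm{exp}}}{\varepsilon}$.) The $S$-rules, which by $2$-GNF can only fire at the very start, are handled the same way but with $\bot$ playing the role of the anchor symbol: e.g.\ $S\to abB_1\cdots B_p$ ($p\ge1$) becomes $\trans{q_0}{a}{\bot}{q_\rho}{\bot B_p}$ then $\trans{q_\rho}{b}{B_p}{q_{\mathrm{exp}}}{B_p B_{p-1}\cdots B_1}$, and $S\to a$ becomes $\trans{q_0}{a}{\bot}{q_{\mathrm{exp}}}{\bot}$. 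Every transition listed has one of the three required forms.

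\textbf{Correctness.} Then I would prove the stack invariant by induction: for all $u\in\Sigma^*$ and all nonterminal strings $\alpha$, $S\Rightarrow^*_{\mathrm{lm}} u\alpha$ iff $P$ has a run consuming exactly $u$ that ends in $q_{\mathrm{exp}}$ with stack $\bot\,\reverse{\alpha}$ (so the leftmost symbol of $\alpha$, the next one to expand, is on top). Both directions follow by induction on the number of rule applications, treating each $q_\rho$ as the transient midpoint of a single rule's two-transition simulation; the case $S\to\varepsilon$ is a one-line check at $q_0$. Since $\bot$ is never pushed and appears only at the bottom, ``$\bot$ on top'' means the stack is exactly $\bot$; and $q_0$ has no incoming transitions, so it is occupied only in the initial configuration, which prevents spurious reuse of an $S$-rule or spurious acceptance of $\varepsilon$. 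Instantiating the invariant with $\alpha=\varepsilon$ (and combining with the $S\to\varepsilon$ check) gives $w\in L(G)$ iff $P$ has an accepting run on $w$, i.e.\ $P$ is equivalent to $G$.

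\textbf{Main obstacle.} The crux is exactly the mismatch flagged above: none of the three permitted transition shapes can \emph{replace} the stack top \emph{and} push more than one symbol, which is literally what a $2$-GNF rule $A\to abB_1\cdots B_p$ demands. The fix is the two-step decomposition — use the first scanned terminal to pre-seed $B_p$ as the new top, and the second to grow it into the full reversed suffix — which is possible precisely because $2$-GNF front-loads \emph{two} terminals onto every long rule. The only other subtlety, that a PDA must start with stack exactly $\bot$ and so cannot be initialized with $S$ on the stack, is dissolved by expanding the (start-only, thanks to $2$-GNF) $S$-rule directly out of $q_0$ with $\bot$ as the anchor.
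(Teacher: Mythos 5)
Your proposal is correct and is essentially the same construction as the paper's: a main loop state expanding the leftmost nonterminal on the stack top, with each long rule $A\to abB_1\cdots B_p$ simulated by a replace (installing $B_p$) followed by a push (adding $B_{p-1}\cdots B_1$), using the two terminals that 2-GNF guarantees. The only cosmetic difference is that you wire the $S$-rules directly out of $q_0$ anchored on $\bot$, whereas the paper first introduces a non-scanning transition pushing $S$ and then eliminates it by composing it with each $S$-transition — these produce the same set of transitions.
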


\begin{proof}
We split the rules into four cases:
\begin{align*}
    S &\rightarrow \varepsilon \\
    A &\rightarrow a \\
    A &\rightarrow ab \\
    A &\rightarrow a b B_1 \cdots B_p \qquad p \geq 1.
\end{align*}
The PDA has an initial state $q_0$ and a main loop state $q_{\mathrm{loop}}$. It works by maintaining all of the unclosed constituents on the stack, which initially is $S$. The state $q_{\mathrm{loop}}$ is an accept state. For now, we allow the PDA to have one non-scanning transition, $\trans{q_0}{\varepsilon}{\bot}{q_{\mathrm{loop}}}{\bot S}$.

If $G$ has the rule $S \rightarrow \varepsilon$, we make state $q_0$ an accept state.

For each rule in $G$ of the form $A \rightarrow a$, we add a pop transition $\trans{q_{\mathrm{loop}}}{a}{A}{q_{\mathrm{loop}}}{\varepsilon}$.

For each rule in $G$ of the form $A \rightarrow a b$, we add a new state $q$, a replace transition $\trans{q_{\mathrm{loop}}}{a}{A}{q}{A}$, and a pop transition $\trans{q}{b}{A}{q_\text{loop}}{\varepsilon}$. This simply scans two symbols while popping $A$.

For each rule in $G$ of the form $A \rightarrow a b B_1 \cdots B_p$ where $p \geq 1$, we add a new state $q$, a replace transition $\trans{q_{\mathrm{loop}}}{a}{A}{q}{B_p}$, and a push transition $\trans{q}{b}{B_p}{q_{\mathrm{loop}}}{B_p B_{p-1} \cdots B_1}$. These two transitions are equivalent to scanning two symbols while replacing $A$ with $B_p B_{p-1} \cdots B_1$ on the stack. Note that we have taken advantage of the fact that 2-GNF affords us \emph{two} scanned input symbols to work around the restriction that push transitions of the form $\trans q a x r {xy_1 \cdots y_k}$ cannot modify the top symbol and push new symbols in the same step. We have split this action into a replace transition followed by a push transition, using the state machine to remember what to scan and push after the replace transition.

Finally, we remove the non-scanning transition $\trans{q_0}{\varepsilon}{\bot}{q_{\mathrm{loop}}}{\bot S}$, and for every transition $\trans{q_{\mathrm{loop}}}{a}{S}{r}{\alpha}$, we add a push transition $\trans{q_0}{a}{\bot}{r}{\bot \alpha}$. Now all transitions are scanning.
\end{proof}

\begin{lemma}
For any PDA $P$ in the form (\ref{eq:almost-restricted-pda}), there is a PDA equivalent to $P$ in restricted form.
\end{lemma}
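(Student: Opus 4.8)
The plan is to simulate any multi-symbol push $\trans{q}{a}{x}{r}{xy_1\cdots y_k}$ of $P$ without ever unpacking more than one $\Gamma$-symbol per scanned input symbol, by introducing a finite set of \emph{composite} stack symbols that pack a bounded string of $\Gamma$-symbols into one symbol. First I would observe that, since $P$ has finitely many transitions, there is a bound $K$ on the length of any string that a single transition pushes; hence the set $\Gamma' = \Gamma \cup \{\,(y_1,\dots,y_j) : 2 \le j \le K,\ y_i \in \Gamma\,\}$ is finite, and $P'$ will be a PDA over $\Gamma'$ with the same states, start state, accept states, and bottom symbol $\bot$ as $P$. I would define a decoding map $\mathrm{dec}$ on stacks of $P'$ by expanding each composite $(y_1,\dots,y_j)$ to the string $y_1\cdots y_j$ (with $y_j$ toward the top) and leaving singletons fixed; the \emph{exposed symbol} of a physical top $x'\in\Gamma'$ is the topmost $\Gamma$-symbol of $\mathrm{dec}(x')$, namely $x'$ itself if $x'\in\Gamma$, or $y_j$ if $x'=(y_1,\dots,y_j)$.

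Next I would translate each transition of $P$ into one restricted transition of $P'$, chosen according to the physical top $x'$ whose exposed symbol is the stack symbol $x$ that the transition reads. A pop $\trans{q}{a}{x}{r}{\varepsilon}$ of $P$ becomes: a physical pop of $x'$ if $x'=x$ is a singleton; a replace of $(y_1,\dots,y_{j-1},x)$ by $(y_1,\dots,y_{j-1})$ if $j\ge 3$; or a replace of $(y_1,x)$ by the singleton $y_1$ if $j=2$. A replace $\trans{q}{a}{x}{r}{y}$ becomes the analogous operation acting on the exposed component of $x'$ (or on the singleton). A push $\trans{q}{a}{x}{r}{xz_1\cdots z_m}$ becomes: scan $a$, leave $x'$ untouched, and push on top of it nothing if $m=0$ (realized as the restricted replace $\trans{q}{a}{x'}{r}{x'}$), the singleton $z_1$ if $m=1$, or the single composite $(z_1,\dots,z_m)$ if $m\ge 2$. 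The crucial point is that a pushed string is installed as \emph{one new symbol}, never merged into the symbol below it, so each packed node has bounded size and nothing grows without bound; unpacking is done lazily, one $\Gamma$-symbol per pop transition of $P$, so $P'$ never needs a non-scanning move and remains real-time, with exactly one restricted transition simulating each transition of $P$.

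Finally I would verify correctness by induction, establishing a step-for-step bijection between runs of $P$ and runs of $P'$ on a common input such that at every step the states coincide and $\mathrm{dec}$ of $P'$'s stack equals $P$'s stack. The base case is immediate ($q_0$, stack $\bot$, $\mathrm{dec}(\bot)=\bot$), and the inductive step is exactly the case analysis above. For acceptance I would note that in the PDAs to which this lemma is applied (those produced by the previous lemma) $\bot$ is never popped or replaced and occurs only as the bottom singleton, so ``$\bot$ on top'' in $P$ corresponds to ``$P'$'s stack is the singleton $\bot$'', giving $L(P')=L(P)$; for arbitrary $P$ in the form~(\ref{eq:almost-restricted-pda}) this normalization of $\bot$ can be arranged by a routine preprocessing step. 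I expect the only mildly delicate parts to be this bookkeeping around $\bot$ and the degenerate $m\in\{0,1\}$ and $j\in\{1,2\}$ cases; the core idea, pack-on-push and peel-on-pop, is straightforward once one sees that push strings have bounded length.
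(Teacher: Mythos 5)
Your proposal is correct and uses essentially the same construction as the paper: introduce composite stack symbols packing bounded strings over $\Gamma$, install each pushed string as a single new composite (never merging into the symbol below), act on the exposed topmost component for replaces, and peel off one symbol per pop. Your extra care with the degenerate cases and the $\bot$/acceptance bookkeeping goes slightly beyond what the paper writes out, but the core argument is identical.
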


\begin{proof}
At this point, there is a maximum length $k$ such that $\trans{q}{a}{x}{r}{x \alpha}$ is a transition and $k = |\alpha|$. We redefine the stack alphabet of the PDA to be $\Gamma' = \Gamma \cup \Gamma^2 \cdots \cup \Gamma^k$. Stack symbols now represent strings of the original stack symbols. Let $\multisym{\alpha}$ denote a single stack symbol for any string $\alpha$. We replace every push transition $\trans{q}{a}{x}{r}{x \beta}$ with push transitions $\trans{q}{a}{\multisym{\alpha x}}{r}{\multisym{\alpha x}\, \multisym{\beta}}$ for all $\alpha \in \bigcup_{i=0}^{k-1} \Gamma^i$. We replace every replace transition $\trans{q}{a}{x}{r}{y}$ with replace transitions $\trans{q}{a}{\multisym{\alpha x}}{r}{\multisym{\alpha y}}$ for all $\alpha$. And we replace every pop transition $\trans{q}{a}{x}{r}{\varepsilon}$ with replace transitions $\trans{q}{a}{\multisym{\alpha x}}{r}{\multisym{\alpha}}$ for all $\alpha$ with $|\alpha| \geq 1$, and a pop transition $\trans{q}{a}{\multisym{x}}{r}{\varepsilon}$.
\end{proof}

So for every CFL $L$, there exists a restricted PDA $P$ that recognizes $L$. The last step is to construct an RNS-RNN. As noted in \cref{sec:proofs}, here we do allow parameters with values of $\pm\infty$.

\begin{lemma} \label{thm:constantpda}
For any restricted-form PDA $P$ that recognizes language $L$, there is an RNS-RNN that recognizes $L$.
\end{lemma}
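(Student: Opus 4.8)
The plan is to make the RNS-RNN run an exact parallel simulation of $P$ inside its stack module: on input $w \cdot \texttt{EOS}$, at timestep $t$ the controller will emit weight $1$ for every transition of $P$ whose scanned symbol is $w_t$ and weight $0$ for every other transition, so that the stack reading $\mathbf{r}_{|w|}$ — which marginalizes over all runs — puts positive mass on an accept configuration of $P$ exactly when $w \in L$. A two-layer MLP on $\mathbf{h}_{|w|+1}$ (which sees $\mathbf{r}_{|w|}$) then thresholds this mass. Before the construction I would normalize $P$ in two harmless ways. First, add a fresh non-accepting sink state $q_\bot$ together with, for every $q,x,a$, an extra transition $\trans{q}{a}{x}{q_\bot}{x}$ and the transitions $\trans{q_\bot}{a}{x}{q_\bot}{x}$; this leaves $L(P)$ and restrictedness unchanged but guarantees $P$ has at least one complete (length-$|w|$) run on every input, so the denominator of \cref{eq:rns-rnn-reading} is always $\ge 1$. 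Second, I assume $\bot$ occurs only at the bottom of $P$'s stack and is never popped (the only replacements of $\bot$ introduced are by itself, hence no-ops); this holds for the PDAs produced by the preceding lemmas, and it lets $P$'s $\bot$ be identified with the freely-manipulable $\bot$ that the bottom-symbol fix (the $t=-1$ push of \cref{eq:new-rns-rnn-gamma-init,eq:new-rns-rnn-alpha-init}) places above the RNS-RNN's inert sentinel.

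Next I set the stack module's state set and stack alphabet to $P$'s $Q$ and $\Gamma$, with the same $q_0$ and $\bot$. I configure the LSTM controller so that $\mathbf{h}_t$ carries a fixed encoding of the one-hot input $\mathbf{x}_t$ together with a fixed encoding of the reading $\mathbf{r}_{t-1}$; since this depends only on the current inputs, a ``forget-everything'' parameterization of the LSTM suffices, and the residual $\tanh$ distortions are absorbed into the downstream affine maps or removed using $\pm\infty$ weights. I then define $\Delta[t] = \exp(\Affine{a}{\mathbf{h}_t})$ by giving the row for entry $(q, x \rightarrow r, v)$ the value $0$ in every input column $a$ for which $P$ has a transition $\trans{q}{a}{x}{r}{v'}$ (where $v' = xy$, $y$, or $\varepsilon$ matches the push/replace/pop shape of $v$) and the value $-\infty$ in every other column. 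Evaluated at the one-hot $\mathbf{x}_t$, this assigns weight $1$ to exactly the transitions of $P$ scanning $w_t$ and $0$ to all others; hence a run of the stack-module PDA of length $t$ has nonzero weight (equal to $1$) iff it is a complete run of $P$ on $w_1 \cdots w_t$, and $\alpha[t][r,y]$ counts such runs ending in state $r$ with $y$ on top.

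Correctness then follows by unwinding the definitions: $w \in L$ iff $P$ has an accepting run on $w$ iff $\sum_{r \in F} \alpha[|w|][r, \bot] \ge 1$ iff $s := \sum_{r \in F} \mathbf{r}_{|w|}[(r, \bot)] > 0$ (the normalizer is $\ge 1$ by the sink-state step, so $s$ is well defined, and $s = 0$ when $w \notin L$). Since $\mathbf{r}_{|w|}$ is the reading fed into $\mathbf{h}_{|w|+1}$, I arrange for $\mathbf{h}_{|w|+1}$ to expose $s$ as a single coordinate (a linear read-off of the reading), and take the MLP $y = \sigma(\Affine{2}{\sigma(\Affine{1}{\mathbf{h}_{|w|+1}})})$ with first layer computing $h' = \sigma(+\infty \cdot s) \in \{\tfrac12, 1\}$ and second layer $y = \sigma(4 h' - 3)$, so $y = \sigma(1) > \tfrac12$ when $s > 0$ and $y = \sigma(-1) < \tfrac12$ when $s = 0$.

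I expect the main obstacle to be not the simulation itself but the bookkeeping around $P$'s bottom symbol versus the RNS-RNN's $t=-1$ push: one must check that $P$'s sentinel really does map onto the manipulable $\bot$, that the inert sentinel beneath it is never exposed by a run of $P$ (which is why I assume $\bot$ is never popped), and that the normalizer of \cref{eq:rns-rnn-reading} is never $0$ (which is why I add the sink state). The controller and MLP gadgets, by contrast, are routine once $\pm\infty$ parameters are permitted, as in the rest of \cref{sec:proofs}.
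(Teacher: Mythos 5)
Your proposal is correct and follows essentially the same route as the paper's proof: emit weight $1$ for transitions of $P$ scanning $w_t$ and weight $0$ (via $-\infty$ logits) for all others, guarantee the normalizer of the reading is nonzero by adding a non-accepting sink (the paper uses a trap state reachable from the initial configuration; you add no-op replace transitions from every configuration, which works equally well), and then threshold $\sum_{f\in F}\mathbf{r}_{|w|}[(f,\bot)]$ through the controller and output MLP. The only differences are cosmetic gadget choices — the paper routes the accept mass through a designated LSTM memory cell and uses $\sigma(\tanh(\tanh(p))-\tfrac12)$ without needing $\infty$ in the MLP, whereas you push the positivity test into a $\sigma(+\infty\cdot s)$ unit — and both satisfy the paper's acceptance criterion.
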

\begin{proof}
Let $P = (Q, \Sigma, \Gamma, \delta, q_0, F, \bot)$.

We write $\textit{ct}(w)$ for the total number of runs of $P$ that read $w$ and end with $\bot$ on top of the stack. We assume that $\textit{ct}(w) > 0$; this can always be ensured by adding to $P$ an extra non-accepting state $q_{\mathrm{trap}}$ and transitions $q_0, \bot \xrightarrow{a} q_{\mathrm{trap}}, \bot$ and $q_{\mathrm{trap}}, \bot \xrightarrow{a} q_{\mathrm{trap}}, \bot$ for all $a \in \Sigma$.

For any state set $X \subseteq Q$, we write $\textit{ct}(w, X)$ for the total number of runs of $P$ that read $w$ and end in a state in $X$ with $\bot$ on top of the stack.
Then for any string $w$, if $w \in L$, then $\textit{ct}(w,F) \geq 1$; otherwise, $\textit{ct}(w,F) = 0$.

We use the following definition for the LSTM controller of the RNS-RNN, where $\mathbf{i}_t$, $\mathbf{f}_t$, and $\mathbf{o}_t$ are the input, forget, and output gates, respectively, and $\mathbf{g}_t$ is the candidate memory cell.
\begin{align*}
    \mathbf{i}_t &= \sigma(\Affine{i}{
        \begin{bmatrix}
            \mathbf{x}_t \\
            \mathbf{h}_{t-1}
        \end{bmatrix}
    }) \\
    \mathbf{f}_t &= \sigma(\Affine{f}{
        \begin{bmatrix}
            \mathbf{x}_t \\
            \mathbf{h}_{t-1}
        \end{bmatrix}
    }) \\
    \mathbf{g}_t &= \tanh(\Affine{g}{
        \begin{bmatrix}
            \mathbf{x}_t \\
            \mathbf{h}_{t-1}
        \end{bmatrix}
    }) \\
    \mathbf{o}_t &= \sigma(\Affine{o}{
        \begin{bmatrix}
            \mathbf{x}_t \\
            \mathbf{h}_{t-1}
        \end{bmatrix}
    }) \\
    \mathbf{c}_t &= \mathbf{f}_t \odot \mathbf{c}_{t-1} + \mathbf{i}_t \odot \mathbf{g}_t \\
    \mathbf{h}_t &= \mathbf{o}_t \odot \tanh(\mathbf{c}_t)
\end{align*}

Construct an RNS-RNN as follows. At each timestep $t$, upon reading input embedding $\mathbf{x}_t$, make the controller emit a weight of 1 for all transitions of $P$ that scan input symbol $w_t$, and a weight of 0 for all other transitions (by setting the corresponding weights of $\Delta[t]$ to $-\infty$).

Let $n = |w|$. After reading $w$, the stack reading $\mathbf{r}_n$ is the probability distribution of states and top stack symbols of $P$ after reading $w$, which the controller can use to compute $\mathbf{h}_{n+1}$ and the MLP output $y$ as follows. Let
\begin{equation*}
    p = \sum_{f \in F} \mathbf{r}_n[(f, \bot)] = \frac{\textit{ct}(w, F)}{\textit{ct}(w)}.
\end{equation*}
Note that $p$ is positive if $w \in L$, and zero otherwise. An affine layer connects $\mathbf{h}_n$, $\mathbf{x}_{n+1}$, and $\mathbf{r}_n$ to the candidate memory cell $\mathbf{g}_{n+1}$ (recall that the input at $n+1$ is $\texttt{EOS}$). Designate a unit $g^{\mathrm{accept}}_{n+1}$ in $\mathbf{g}_{n+1}$. For each $f \in F$, set the incoming weight from $\mathbf{r}_n[(f, \bot)]$ to 1 and all other incoming weights to 0, so that $g^{\mathrm{accept}}_{n+1} = \tanh(p)$. Set this unit's input gate to 1 and forget gate to 0, so that memory cell $c^{\mathrm{accept}}_{n+1} = \tanh(p)$. Set its output gate to 1, so that hidden unit $h^{\mathrm{accept}}_{n+1} = \tanh(\tanh(p))$, which is positive if $w \in L$ and zero otherwise.

Finally, to compute $y$, use one hidden unit $y_1$ in the output MLP layer, and set the incoming weight from $h^{\mathrm{accept}}_{n+1}$ to 1, and all other weights to 0. So $y_1 = \sigma(\tanh(\tanh(p)))$, which is greater than $\frac{1}{2}$ if $w \in L$ and equal to $\frac{1}{2}$ otherwise. Set the weight connecting $y_1$ to $y$ to 1, and set the bias term to $-\frac{1}{2}$, so that $y = \sigma(\sigma(\tanh(\tanh(p))) - \frac{1}{2})$, which is greater than $\frac{1}{2}$ if $w \in L$, and equal to $\frac{1}{2}$ otherwise.
\end{proof}

\subsection{Proof of \cref{thm:intersect}}
\label{sec:intersect-proof}

Without loss of generality, assume $k=2$. Let $P_1 = (Q_1, \Sigma, \Gamma_1, \delta_1, s_1, F_1, \bot)$ and $P_2 = (Q_2, \Sigma, \Gamma_2, \delta_2, s_2, F_2, \bot)$ be restricted-form PDAs recognizing $L_1$ and $L_2$, respectively. We can construct both so that $Q_1 \cap Q_2 = \emptyset$, and $s_1$ and $s_2$ each have no incoming transitions.
Construct a new PDA 
\begin{align*}
P &= (Q, \Sigma, \Gamma_1 \cup \Gamma_2, \delta, s, F) \\
Q &= (Q_1 \setminus \{s_1\}) \cup (Q_2 \setminus \{s_2\}) \cup \{s\} \\
\delta(q, x, a) &= \begin{cases}
\delta_1(q, x, a) & q \in Q_1 \setminus \{s_1\} \\
\delta_2(q, x, a) & q \in Q_2 \setminus \{s_2\} \\
\delta_1(s_1, x, a) \cup \delta_2(s_2, x, a) & q = s
\end{cases} \\
F &= \begin{cases}
(F_1 \setminus \{s_1\}) \cup (F_2 \setminus \{s_2\}) \cup \{s\} & s_1 \in F_1 \wedge s_2 \in F_2 \\
(F_1 \setminus \{s_1\}) \cup (F_2 \setminus \{s_2\}) & \text{otherwise.}
\end{cases}
\end{align*}
So far, this is just the standard union construction for PDAs.

Construct an RNS-RNN that sets $\Delta[t]$ according to $\delta$, and assume $\textit{ct}(w) > 0$, as in \cref{thm:constantpda}. Let
\begin{align*}
    p_1 &= \sum_{f \in F_1} \mathbf{r}_n[(f, \bot)] = \frac{\textit{ct}(w, F_1)}{\textit{ct}(w)} \\
    p_2 &= \sum_{f \in F_2} \mathbf{r}_n[(f, \bot)] = \frac{\textit{ct}(w, F_2)}{\textit{ct}(w)}.
\end{align*}
Let $n = |w|$. For each $p_i$, designate a unit $g^{\mathrm{accept}}_{i,n+1}$ in $\mathbf{g}_{n+1}$ that will be positive if $p_1 > 0$ and negative if $p_1 = 0$. In order to ensure that $g^{\mathrm{accept}}_{i,n+1} \not= 0$, we subtract a small value from $p_i$ that is smaller than the smallest possible non-zero value of $p_i$. The RNS-RNN computes this value as follows. Let $b = |Q| (2 |\Gamma| + 1)$, which is the maximum number of choices $P$ can make from any given configuration. Designate one memory cell $c^{\mathrm{offset}}_t$ in $\mathbf{c}_t$. At the first timestep, $c^{\mathrm{offset}}_t$ is initialized to $\frac{1}{b}$, and at each subsequent timestep, the forget gate is used to multiply this cell by $\frac{1}{b}$ (the input gate is set to 0). So after reading $t$ symbols, $c^{\mathrm{offset}}_t = \frac{1}{b^t}$. Set the output gate for that cell to 1, so there is a hidden unit $h^{\mathrm{offset}}_n$ in $\mathbf{h}_n$ that contains the value $\tanh(\frac{1}{b^n})$. The smallest non-zero value of $p_i$ is $\frac{1}{\textit{ct}(w)}$, and since $\textit{ct}(w) \leq b^n$ and $\tanh(x) < x$ when $x > 0$, $h^{\mathrm{offset}}_n$ is smaller than it.

As for the incoming weights to $g^{\mathrm{accept}}_{i,n+1}$, for each $f \in F_i$, set the weight for $\mathbf{r}_n[(f, \bot)]$ to 1, and set the weight for $h^{\mathrm{offset}}_n$ to $-1$; set all other weights to 0. So $g^{\mathrm{accept}}_{i,n+1} = \tanh(p_i - h^{\mathrm{offset}}_n$), which is positive if $p_i > 0$ and negative otherwise. Set this unit's input gate to 1 and forget gate to 0, so that memory cell $c^{\mathrm{accept}}_{i,n+1} = g^{\mathrm{accept}}_{i,n+1}$. Set its output gate to 1, so that hidden unit $h^{\mathrm{accept}}_{i,n+1} = \tanh(g^{\mathrm{accept}}_{i,n+1})$, which is positive if $p_i > 0$ and negative otherwise.

In the output MLP's hidden layer, for each $h^{\mathrm{accept}}_{i,n+1}$, include a unit $y_i$, and set its incoming weight from $h^{\mathrm{accept}}_{i,n+1}$ to $\infty$, and all other weights to 0. So $y_i = \sigma(\infty \cdot h^{\mathrm{accept}}_{i,n+1})$, which is 1 if $p_i > 0$ and 0 otherwise. Finally, to compute $y$, set the incoming weights from each $y_i$ to 1, and set the bias term to $-\frac{3}{2}$. So $y = \sigma(y_1 + y_2 - \frac{3}{2})$, which is greater than $\frac{1}{2}$ if both $p_1 > 0$ and $p_2 > 0$, meaning $w \in L_1 \cap L_2$, and less than $\frac{1}{2}$ otherwise.

\section{Additional discussion of non-context-free languages}
\label{sec:non-cfl-language-details}

Below we discuss each of the non-CFLs of \cref{sec:non-cfls} in more detail, including details of how a real-time multi-stack automaton could recognize it. We also describe three non-CFLs not included in \cref{sec:non-cfls}.
\begin{description}
\item[$\bm{\CountThree{}}$] The language $\{ \sym{a}^n \sym{b}^n \sym{c}^n \mid n \geq 0 \}$, a classic example of a non-CFL \citep{sipser:2013}. A two-stack automaton can recognize this language as follows. While reading the $\sym{a}$'s, push them to stack 1. While reading the $\sym{b}$'s, match them with $\sym{a}$'s popped from stack 1 while pushing $\sym{b}$'s to stack 2. While reading the $\sym{c}$'s, match them with $\sym{b}$'s popped from stack 2. As the stacks are only needed to remember the \emph{count} of each symbol type, this language is also an example of a counting language; \Citet{weiss+:2018} showed that LSTMs can learn this language by using their memory cells as counters.
\item[$\bm{\MarkedReverseAndCopy{}}$] The language $\{ w \sym{\#} \reverse{w} \sym{\#} w \mid w \in \{\sym{0}, \sym{1}\}^\ast \}$. A two-stack automaton can recognize it as follows. While reading the first $w$, push it to stack 1. While reading the middle $\reverse{w}$, match it with symbols popped from stack 1 while pushing $\reverse{w}$ to stack 2. While reading the last $w$, match it with symbols popped from stack 2. The explicit $\sym{\#}$ symbols are meant to make it easier for a model to learn when to transition between these three phases.
\item[$\bm{\CountAndCopy{}}$] The language $\{w \sym{\#}^n w \mid \text{$w \in \{\sym{0}, \sym{1}\}^\ast$, $n \geq 0$, and $|w| = n$}\}$. A two-stack automaton can recognize it as follows. While reading the first $w$, push it to stack 1. While reading $\sym{\#}^n$, move the symbols from stack 1 to stack 2 in reverse. While reading the final $w$, match it with symbols popped from stack 2. Unlike \MarkedReverseAndCopy{}, the middle $\sym{\#}^n$ section offers a model few hints that it should push $w$ to the stack beforehand.
\item[$\bm{\MarkedCopy{}}$] The language $\{ w \sym{\#} w \mid w \in \{\sym{0}, \sym{1}\}^\ast \}$. A three-stack automaton can recognize this language as follows. Let $w = uv$ where $|u| = |v|$ (for simplicity assume $|w|$ is even). While reading the first $u$, push it to stack 1. While reading the first $v$, push it to stack 2, and move the symbols from stack 1 to stack 3 in reverse. While reading the second $u$, match it with symbols popped from stack 3, and move the symbols from stack 2 to stack 1 in reverse. While reading the second $v$, match it with symbols popped from stack 1. The explicit $\sym{\#}$ symbol is meant to make learning this task easier.
\item[$\bm{\UnmarkedCopyDifferentAlphabets{}}$] The language $\{ w w' \mid \text{$w \in \{\sym{0}, \sym{1}\}^\ast$ and $w' = \phi(w)$} \}$, where $\phi$ is the homomorphism $\phi(\sym{0}) = \sym{2}$, $\phi(\sym{1}) = \sym{3}$. A three-stack automaton can recognize this language using a similar strategy to \MarkedCopy{}. In this case, a switch to a different alphabet, rather than a $\sym{\#}$ symbol, marks the second half of the string.
\item[$\bm{\UnmarkedReverseAndCopy{}}$] The language $\{ w \reverse{w} w \mid w \in \{\sym{0}, \sym{1}\}^\ast \}$. A two-stack automaton can recognize it using a similar strategy to \MarkedReverseAndCopy{}, but it must nondeterministicaly guess $|w|$.
\item[$\bm{\UnmarkedCopy{}}$] The language $\{ ww \mid w \in \{ \sym{0}, \sym{1} \}^\ast \}$, another classic example of a non-CFL \citep{sipser:2013}. A three-stack automaton can recognize it using a similar strategy to \MarkedCopy{}, except it must nondeterministically guess $|w|$.
\end{description}

\section{Additional details for formal language experiments}
\label{sec:non-cfl-extra-details}

Here we describe the training procedure used for the non-CFL and capacity experiments in \cref{sec:non-cfls,sec:capacity} in more detail. We trained each model by minimizing its cross-entropy (summed over the timestep dimension of each batch) on the training set, and we used per-symbol cross-entropy on the validation set as the early stopping criterion. We optimized the parameters of the model with Adam. For each training run, we randomly sampled the initial learning rate from a log-uniform distribution over $[5\times{10}^{-4}, 1\times{10}^{-2}]$, and we used a gradient clipping threshold of 5. We initialized all fully-connected layers except for those in the LSTM controller with Xavier uniform initialization, and all other parameters uniformly from $[-0.1, 0.1]$. We used mini-batches of size 10; each batch always contained examples of equal lengths. We randomly shuffled batches before each epoch. We multiplied the learning rate by 0.9 after 5 epochs of no improvement on the validation set, and we stopped early after 10 epochs of no improvement.

\section{Additional results for non-CFL experiments}
\label{sec:non-cfl-extra-results}

In \cref{fig:non-cfls-extra}, we show results for three non-CFLs which we did not include in \cref{sec:non-cfls}. The input and forget gates of the LSTM controller are not tied, so the values of its memory cells are not bounded to $(0, 1)$ and can be used as counters. All models easily learned \CountThree{}, likely because the LSTM controller by itself can solve it using a counting mechanism \citep{weiss+:2018}.  Only the models capable of simulating multiple stacks, Sup. 3-3-3 and RNS 3-3, achieved optimal cross-entropy on \MarkedReverseAndCopy{} and \CountAndCopy{}. No models succeeded on the unmarked copying tasks (\UnmarkedReverseAndCopy{} and \UnmarkedCopy{}), although Sup. 10 achieved the best performance on the test set for \UnmarkedReverseAndCopy{}.

\begin{figure*}
\pgfplotsset{
  every axis/.style={
    width=3.3in,
    height=2.2in
  },
  title style={yshift=-4.5ex},
}
    \centering
\begin{tabular}{@{}l@{\hspace{1em}}l@{}}    
    \multicolumn{2}{c}{
    \begin{tikzpicture}
\definecolor{color0}{rgb}{0.12156862745098,0.466666666666667,0.705882352941177}
\definecolor{color1}{rgb}{1,0.498039215686275,0.0549019607843137}
\definecolor{color2}{rgb}{0.172549019607843,0.627450980392157,0.172549019607843}
\definecolor{color3}{rgb}{0.83921568627451,0.152941176470588,0.156862745098039}
\definecolor{color4}{rgb}{0.580392156862745,0.403921568627451,0.741176470588235}
\begin{axis}[
  hide axis,
  height=0.7in,
  legend style={
    draw=none,
    /tikz/every even column/.append style={column sep=0.4cm}
  },
  legend columns=-1,
  xmin=0,
  xmax=1,
  ymin=0,
  ymax=1
]
\addlegendimage{color0}
\addlegendentry{LSTM}
\addlegendimage{color1}
\addlegendentry{Sup. 10}
\addlegendimage{color2}
\addlegendentry{Sup. 3-3-3}
\addlegendimage{color3}
\addlegendentry{Sup. h}
\addlegendimage{color4}
\addlegendentry{RNS 3-3}
\end{axis}
\end{tikzpicture}
    } \\
    \input{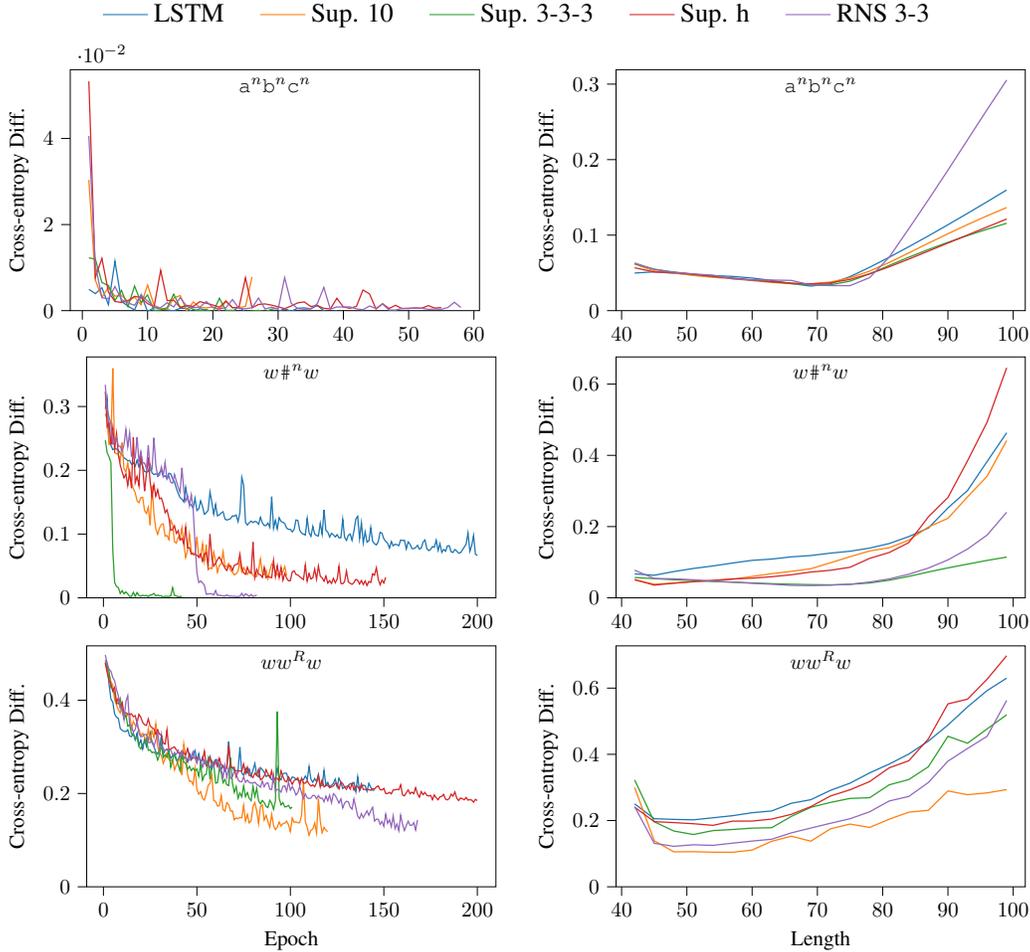}
\end{tabular}
\caption{Performance on three non-CFLs not included in \cref{sec:non-cfls}. Cross-entropy difference in nats, on the validation set by epoch (left), and on the test set by string length (right). Each line is the best of 10 runs, selected by validation perfomance. All models easily solved \CountThree{}. As with \CountAndCopy{}, only multi-stack models (Sup. 3-3-3 and RNS 3-3) solved \CountAndCopy{}. As with \UnmarkedCopy{}, no models solved \UnmarkedReverseAndCopy{}.}
\label{fig:non-cfls-extra}
\end{figure*}

\section{Additional results for capacity experiments}
\label{sec:capacity-extra-results}

Here, we describe the languages of \cref{sec:capacity} in more detail, plus an additional language, \UnmarkedReversal{}.
\begin{description}
    \item[$\bm{\MarkedReversal{}}$] The language $\{ w \sym{\#} \reverse{w} \mid w \in \{ \sym{0}, \sym{1}, \cdots, k-1 \}^\ast \}$. This is a simple deterministic CFL.
    \item[Dyck] The language of strings over the alphabet $\{ \sym{(}_1, \sym{)}_1, \sym{(}_2, \sym{)}_2, \cdots, \sym{(}_k, \sym{)}_k \}$ where all brackets are properly balanced and nested in pairs of $\sym{(}_i$ and $\sym{)}_i$. This is a more complicated but still deterministic CFL.
    \item[$\bm{\UnmarkedReversal{}}$] The language $\{ w \reverse{w} \mid w \in \{ \sym{0}, \sym{1}, \cdots, k-1 \}^\ast \}$. This is a nondeterministic CFL which requires a model to guess $|w|$.
\end{description}

In \cref{fig:capacity-mean-std}, we show the results of the same experiments as in \cref{sec:capacity}, but with standard deviations included. We also show results for the language \UnmarkedReversal{}. In \cref{fig:capacity-best}, for the same experiments, we show the minimum cross-entropy difference on the validation set out of all 10 random restarts, rather than the mean. None of the models performed significantly better than the LSTM on the nondeterministic \UnmarkedReversal{} language, suggesting that they cannot simultaneously combine the tricks of encoding symbol types as points in high-dimensional space and nondeterministically guessing $|w|$. Although the VRNS-RNN does combine both nondeterminism and vectors, the nondeterminism only applies to the discrete symbols of $\Gamma$, of which there are no more than 3, far fewer than $k$ when $k \geq 40$.

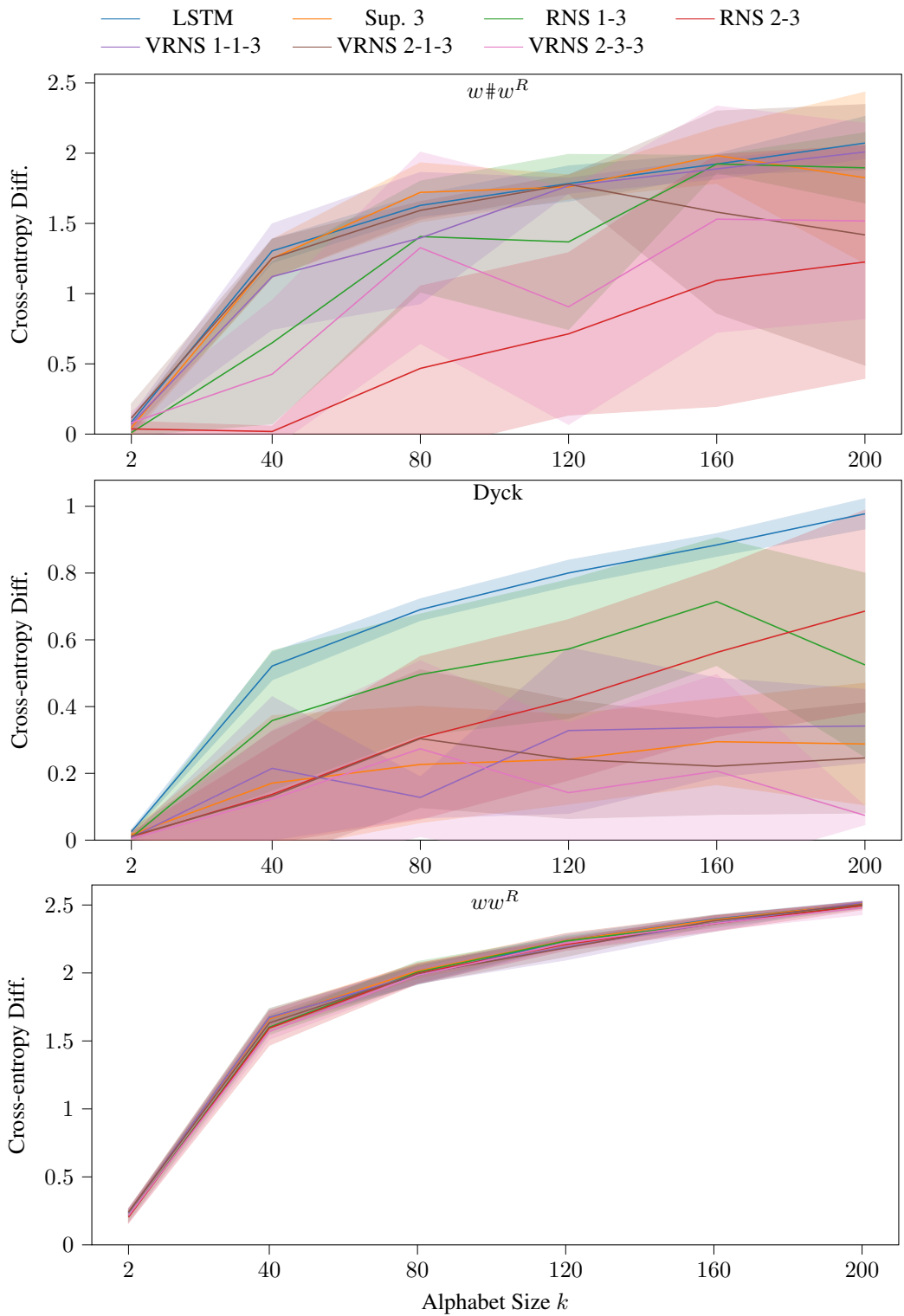
\begin{figure*}
\pgfplotsset{
  every axis/.style={
    height=2.8in,
    width=5.5in
  },
  title style={yshift=-4.5ex},
}
    \centering
    \begin{tikzpicture}
\definecolor{color0}{rgb}{0.12156862745098,0.466666666666667,0.705882352941177}
\definecolor{color1}{rgb}{1,0.498039215686275,0.0549019607843137}
\definecolor{color2}{rgb}{0.172549019607843,0.627450980392157,0.172549019607843}
\definecolor{color3}{rgb}{0.83921568627451,0.152941176470588,0.156862745098039}
\definecolor{color4}{rgb}{0.580392156862745,0.403921568627451,0.741176470588235}
\definecolor{color5}{rgb}{0.549019607843137,0.337254901960784,0.294117647058824}
\definecolor{color6}{rgb}{0.890196078431372,0.466666666666667,0.76078431372549}
\begin{axis}[
  hide axis,
  height=0.7in,
  legend style={
    draw=none,
    /tikz/every even column/.append style={column sep=0.4cm}
  },
  legend columns=4,
  xmin=0,
  xmax=1,
  ymin=0,
  ymax=1
]
\addlegendimage{color0}
\addlegendentry{LSTM}
\addlegendimage{color1}
\addlegendentry{Sup. 3}
\addlegendimage{color2}
\addlegendentry{RNS 1-3}
\addlegendimage{color3}
\addlegendentry{RNS 2-3}
\addlegendimage{color4}
\addlegendentry{VRNS 1-1-3}
\addlegendimage{color5}
\addlegendentry{VRNS 2-1-3}
\addlegendimage{color6}
\addlegendentry{VRNS 2-3-3}
\end{axis}
\end{tikzpicture}
\begin{tikzpicture}

\definecolor{color0}{rgb}{0.12156862745098,0.466666666666667,0.705882352941177}
\definecolor{color1}{rgb}{1,0.498039215686275,0.0549019607843137}
\definecolor{color2}{rgb}{0.172549019607843,0.627450980392157,0.172549019607843}
\definecolor{color3}{rgb}{0.83921568627451,0.152941176470588,0.156862745098039}
\definecolor{color4}{rgb}{0.580392156862745,0.403921568627451,0.741176470588235}
\definecolor{color5}{rgb}{0.549019607843137,0.337254901960784,0.294117647058824}
\definecolor{color6}{rgb}{0.890196078431372,0.466666666666667,0.76078431372549}

\begin{axis}[
minor xtick={},
minor ytick={},
tick align=outside,
tick pos=left,
title={\MarkedReversal{}},
x grid style={white!69.0196078431373!black},
xmin=-7.9, xmax=209.9,
xtick style={color=black},
xtick={2,40,80,120,160,200},
y grid style={white!69.0196078431373!black},
ylabel={Cross-entropy Diff.},
ymin=0, ymax=2.5625961176694,
ytick style={color=black},
ytick={0,0.5,1,1.5,2,2.5,3}
]
\path [draw=color0, fill=color0, opacity=0.2]
(axis cs:2,0.111469618219867)
--(axis cs:2,0.057454682269513)
--(axis cs:40,1.22035605750996)
--(axis cs:80,1.55003179671772)
--(axis cs:120,1.65831361184835)
--(axis cs:160,1.84911425575425)
--(axis cs:200,1.88318247501491)
--(axis cs:200,2.26135285850988)
--(axis cs:200,2.26135285850988)
--(axis cs:160,1.99615344499888)
--(axis cs:120,1.9095558449824)
--(axis cs:80,1.70645750710515)
--(axis cs:40,1.38770858043445)
--(axis cs:2,0.111469618219867)
--cycle;

\path [draw=color1, fill=color1, opacity=0.2]
(axis cs:2,0.103708660921943)
--(axis cs:2,-0.0131421046359158)
--(axis cs:40,1.11486738576908)
--(axis cs:80,1.51056694122223)
--(axis cs:120,1.67318064811784)
--(axis cs:160,1.78581796643377)
--(axis cs:200,1.2162991176876)
--(axis cs:200,2.43491732941419)
--(axis cs:200,2.43491732941419)
--(axis cs:160,2.18092838632708)
--(axis cs:120,1.84572888929567)
--(axis cs:80,1.93158672042765)
--(axis cs:40,1.38676768038363)
--(axis cs:2,0.103708660921943)
--cycle;

\path [draw=color2, fill=color2, opacity=0.2]
(axis cs:2,0.0331647897381157)
--(axis cs:2,-0.012488181119011)
--(axis cs:40,0.0729520913165839)
--(axis cs:80,1.01173730265439)
--(axis cs:120,0.744804794233437)
--(axis cs:160,1.86217374991325)
--(axis cs:200,1.64601492160671)
--(axis cs:200,2.14569224240599)
--(axis cs:200,2.14569224240599)
--(axis cs:160,1.98441548569378)
--(axis cs:120,1.99168914322955)
--(axis cs:80,1.80146305341062)
--(axis cs:40,1.22641965405395)
--(axis cs:2,0.0331647897381157)
--cycle;

\path [draw=color3, fill=color3, opacity=0.2]
(axis cs:2,0.0910480198261472)
--(axis cs:2,-0.0175892878954361)
--(axis cs:40,-0.0199724568344494)
--(axis cs:80,-0.11865843569006)
--(axis cs:120,0.135123716429187)
--(axis cs:160,0.197029616284575)
--(axis cs:200,0.3966214470597)
--(axis cs:200,2.05425075562203)
--(axis cs:200,2.05425075562203)
--(axis cs:160,1.99094128453833)
--(axis cs:120,1.29183965599825)
--(axis cs:80,1.05410971407619)
--(axis cs:40,0.0566884611497331)
--(axis cs:2,0.0910480198261472)
--cycle;

\path [draw=color4, fill=color4, opacity=0.2]
(axis cs:2,0.127564309050267)
--(axis cs:2,0.0141243502849896)
--(axis cs:40,0.745503796460557)
--(axis cs:80,0.928646886473609)
--(axis cs:120,1.71493336799646)
--(axis cs:160,1.81569511561411)
--(axis cs:200,1.96211767898691)
--(axis cs:200,2.05540013384128)
--(axis cs:200,2.05540013384128)
--(axis cs:160,1.96125886452217)
--(axis cs:120,1.82228940487498)
--(axis cs:80,1.86343775673326)
--(axis cs:40,1.49716549013806)
--(axis cs:2,0.127564309050267)
--cycle;

\path [draw=color5, fill=color5, opacity=0.2]
(axis cs:2,0.215728558093176)
--(axis cs:2,0.0137167092763473)
--(axis cs:40,1.11510123697872)
--(axis cs:80,1.53046064844109)
--(axis cs:120,1.71147778840327)
--(axis cs:160,0.861200197107784)
--(axis cs:200,0.490935692152639)
--(axis cs:200,2.34631125661451)
--(axis cs:200,2.34631125661451)
--(axis cs:160,2.30003630355764)
--(axis cs:120,1.84580686921882)
--(axis cs:80,1.65511003661914)
--(axis cs:40,1.38933453725261)
--(axis cs:2,0.215728558093176)
--cycle;

\path [draw=color6, fill=color6, opacity=0.2]
(axis cs:2,0.159700615553539)
--(axis cs:2,0.00125117300604254)
--(axis cs:40,-0.0998445419210768)
--(axis cs:80,0.646241511667048)
--(axis cs:120,0.0681332777775993)
--(axis cs:160,0.7249456949896)
--(axis cs:200,0.821456377138521)
--(axis cs:200,2.21333914829483)
--(axis cs:200,2.21333914829483)
--(axis cs:160,2.33579598314887)
--(axis cs:120,1.74404231030519)
--(axis cs:80,2.00761250744549)
--(axis cs:40,0.952821339167298)
--(axis cs:2,0.159700615553539)
--cycle;

\addplot [semithick, color0]
table {%
2 0.0844621502446901
40 1.30403231897221
80 1.62824465191144
120 1.78393472841537
160 1.92263385037656
200 2.07226766676239
};
\addplot [semithick, color1]
table {%
2 0.0452832781430136
40 1.25081753307636
80 1.72107683082494
120 1.75945476870676
160 1.98337317638042
200 1.82560822355089
};
\addplot [semithick, color2]
table {%
2 0.0103383043095524
40 0.649685872685266
80 1.40660017803251
120 1.36824696873149
160 1.92329461780351
200 1.89585358200635
};
\addplot [semithick, color3]
table {%
2 0.0367293659653556
40 0.0183580021576418
80 0.467725639193064
120 0.713481686213718
160 1.09398545041145
200 1.22543610134087
};
\addplot [semithick, color4]
table {%
2 0.0708443296676283
40 1.12133464329931
80 1.39604232160344
120 1.76861138643572
160 1.88847699006814
200 2.00875890641409
};
\addplot [semithick, color5]
table {%
2 0.114722633684762
40 1.25221788711567
80 1.59278534253012
120 1.77864232881105
160 1.58061825033271
200 1.41862347438357
};
\addplot [semithick, color6]
table {%
2 0.0804758942797908
40 0.426488398623111
80 1.32692700955627
120 0.906087794041393
160 1.53037083906924
200 1.51739776271667
};
\end{axis}

\end{tikzpicture}
\begin{tikzpicture}

\definecolor{color0}{rgb}{0.12156862745098,0.466666666666667,0.705882352941177}
\definecolor{color1}{rgb}{1,0.498039215686275,0.0549019607843137}
\definecolor{color2}{rgb}{0.172549019607843,0.627450980392157,0.172549019607843}
\definecolor{color3}{rgb}{0.83921568627451,0.152941176470588,0.156862745098039}
\definecolor{color4}{rgb}{0.580392156862745,0.403921568627451,0.741176470588235}
\definecolor{color5}{rgb}{0.549019607843137,0.337254901960784,0.294117647058824}
\definecolor{color6}{rgb}{0.890196078431372,0.466666666666667,0.76078431372549}

\begin{axis}[
minor xtick={},
minor ytick={},
tick align=outside,
tick pos=left,
title={Dyck},
x grid style={white!69.0196078431373!black},
xmin=-7.9, xmax=209.9,
xtick style={color=black},
xtick={2,40,80,120,160,200},
y grid style={white!69.0196078431373!black},
ylabel={Cross-entropy Diff.},
ymin=0, ymax=1.07801755454908,
ytick style={color=black},
ytick={0,0.2,0.4,0.6,0.8,1,1.2}
]
\path [draw=color0, fill=color0, opacity=0.2]
(axis cs:2,0.0315383749783663)
--(axis cs:2,0.0187001798622588)
--(axis cs:40,0.479954626502015)
--(axis cs:80,0.658054383142328)
--(axis cs:120,0.761810793405562)
--(axis cs:160,0.850168819344584)
--(axis cs:200,0.932269232260881)
--(axis cs:200,1.0227402902468)
--(axis cs:200,1.0227402902468)
--(axis cs:160,0.918134704121891)
--(axis cs:120,0.838581719352101)
--(axis cs:80,0.722809560481088)
--(axis cs:40,0.563042316637193)
--(axis cs:2,0.0315383749783663)
--cycle;

\path [draw=color1, fill=color1, opacity=0.2]
(axis cs:2,0.029241542525223)
--(axis cs:2,0.0035529642046486)
--(axis cs:40,-0.031507619780909)
--(axis cs:80,0.0523758350759852)
--(axis cs:120,0.107890957919429)
--(axis cs:160,0.166199488329443)
--(axis cs:200,0.1058541058703)
--(axis cs:200,0.470148149019192)
--(axis cs:200,0.470148149019192)
--(axis cs:160,0.423556798329152)
--(axis cs:120,0.376162235265869)
--(axis cs:80,0.400902629040796)
--(axis cs:40,0.372785675389098)
--(axis cs:2,0.029241542525223)
--cycle;

\path [draw=color2, fill=color2, opacity=0.2]
(axis cs:2,0.0151576377058689)
--(axis cs:2,0.000129850944043865)
--(axis cs:40,0.148833008699898)
--(axis cs:80,0.314961659257805)
--(axis cs:120,0.364481847545778)
--(axis cs:160,0.523356842360708)
--(axis cs:200,0.249814285706078)
--(axis cs:200,0.80002691164383)
--(axis cs:200,0.80002691164383)
--(axis cs:160,0.906041369393131)
--(axis cs:120,0.78010671375967)
--(axis cs:80,0.677402218239414)
--(axis cs:40,0.56712772496573)
--(axis cs:2,0.0151576377058689)
--cycle;

\path [draw=color3, fill=color3, opacity=0.2]
(axis cs:2,0.0231998873047773)
--(axis cs:2,-0.00312570936389152)
--(axis cs:40,-0.00808742504874926)
--(axis cs:80,0.0626531508153921)
--(axis cs:120,0.179481084214762)
--(axis cs:160,0.310718820753013)
--(axis cs:200,0.383439623820001)
--(axis cs:200,0.988323342669599)
--(axis cs:200,0.988323342669599)
--(axis cs:160,0.813533096344782)
--(axis cs:120,0.660599273785516)
--(axis cs:80,0.550038812043493)
--(axis cs:40,0.282007358283779)
--(axis cs:2,0.0231998873047773)
--cycle;

\path [draw=color4, fill=color4, opacity=0.2]
(axis cs:2,0.012517290550947)
--(axis cs:2,0.00333331540992274)
--(axis cs:40,0.000532725754329638)
--(axis cs:80,0.0664465286709502)
--(axis cs:120,0.080031265459275)
--(axis cs:160,0.189728905923111)
--(axis cs:200,0.232033690246737)
--(axis cs:200,0.451411862844004)
--(axis cs:200,0.451411862844004)
--(axis cs:160,0.485150863075335)
--(axis cs:120,0.57643528205824)
--(axis cs:80,0.189467938361853)
--(axis cs:40,0.429598555370402)
--(axis cs:2,0.012517290550947)
--cycle;

\path [draw=color5, fill=color5, opacity=0.2]
(axis cs:2,0.0305030943938295)
--(axis cs:2,-0.00591087338085511)
--(axis cs:40,-0.0614238312107399)
--(axis cs:80,0.0976182657196369)
--(axis cs:120,0.0646891198905431)
--(axis cs:160,0.0769258119908996)
--(axis cs:200,0.0814103821904747)
--(axis cs:200,0.410882606432708)
--(axis cs:200,0.410882606432708)
--(axis cs:160,0.365848943924746)
--(axis cs:120,0.419921188041271)
--(axis cs:80,0.510321896474776)
--(axis cs:40,0.325676324980976)
--(axis cs:2,0.0305030943938295)
--cycle;

\path [draw=color6, fill=color6, opacity=0.2]
(axis cs:2,0.00504953229109272)
--(axis cs:2,0.00295980439443434)
--(axis cs:40,-0.08280499579879)
--(axis cs:80,0.0101172170639606)
--(axis cs:120,-0.0664066225530115)
--(axis cs:160,-0.0825244201261771)
--(axis cs:200,0.0461646072159429)
--(axis cs:200,0.101867074216472)
--(axis cs:200,0.101867074216472)
--(axis cs:160,0.495963485908928)
--(axis cs:120,0.350570528744579)
--(axis cs:80,0.537213621011683)
--(axis cs:40,0.329114470068979)
--(axis cs:2,0.00504953229109272)
--cycle;

\addplot [semithick, color0]
table {%
2 0.0251192774203126
40 0.521498471569604
80 0.690431971811708
120 0.800196256378832
160 0.884151761733237
200 0.977504761253839
};
\addplot [semithick, color1]
table {%
2 0.0163972533649358
40 0.170639027804095
80 0.226639232058391
120 0.242026596592649
160 0.294878143329298
200 0.288001127444746
};
\addplot [semithick, color2]
table {%
2 0.0076437443249564
40 0.357980366832814
80 0.496181938748609
120 0.572294280652724
160 0.714699105876919
200 0.524920598674954
};
\addplot [semithick, color3]
table {%
2 0.0100370889704429
40 0.136959966617515
80 0.306345981429443
120 0.420040179000139
160 0.562125958548897
200 0.6858814832448
};
\addplot [semithick, color4]
table {%
2 0.00792530298043484
40 0.215065640562366
80 0.127957233516401
120 0.328233273758757
160 0.337439884499223
200 0.34172277654537
};
\addplot [semithick, color5]
table {%
2 0.0122961105064872
40 0.132126246885118
80 0.303970081097207
120 0.242305153965907
160 0.221387377957823
200 0.246146494311592
};
\addplot [semithick, color6]
table {%
2 0.00400466834276353
40 0.123154737135094
80 0.273665419037822
120 0.142081953095784
160 0.206719532891375
200 0.0740158407162075
};
\end{axis}

\end{tikzpicture}
\begin{tikzpicture}

\definecolor{color0}{rgb}{0.12156862745098,0.466666666666667,0.705882352941177}
\definecolor{color1}{rgb}{1,0.498039215686275,0.0549019607843137}
\definecolor{color2}{rgb}{0.172549019607843,0.627450980392157,0.172549019607843}
\definecolor{color3}{rgb}{0.83921568627451,0.152941176470588,0.156862745098039}
\definecolor{color4}{rgb}{0.580392156862745,0.403921568627451,0.741176470588235}
\definecolor{color5}{rgb}{0.549019607843137,0.337254901960784,0.294117647058824}
\definecolor{color6}{rgb}{0.890196078431372,0.466666666666667,0.76078431372549}

\begin{axis}[
minor xtick={},
minor ytick={},
tick align=outside,
tick pos=left,
title={\UnmarkedReversal{}},
x grid style={white!69.0196078431373!black},
xlabel={Alphabet Size \(\displaystyle k\)},
xmin=-7.9, xmax=209.9,
xtick style={color=black},
xtick={2,40,80,120,160,200},
y grid style={white!69.0196078431373!black},
ylabel={Cross-entropy Diff.},
ymin=0, ymax=2.64738746815816,
ytick style={color=black},
ytick={0,0.5,1,1.5,2,2.5,3}
]
\path [draw=color0, fill=color0, opacity=0.2]
(axis cs:2,0.255835333187225)
--(axis cs:2,0.216909450006309)
--(axis cs:40,1.60756741517434)
--(axis cs:80,1.91621032410497)
--(axis cs:120,2.19394389718996)
--(axis cs:160,2.36705320425083)
--(axis cs:200,2.48573685679309)
--(axis cs:200,2.52900249774747)
--(axis cs:200,2.52900249774747)
--(axis cs:160,2.4259362558281)
--(axis cs:120,2.27780440026014)
--(axis cs:80,2.04930001539337)
--(axis cs:40,1.73787037986342)
--(axis cs:2,0.255835333187225)
--cycle;

\path [draw=color1, fill=color1, opacity=0.2]
(axis cs:2,0.265621453086222)
--(axis cs:2,0.214228133620589)
--(axis cs:40,1.58863395242426)
--(axis cs:80,1.96178319146367)
--(axis cs:120,2.18495552393471)
--(axis cs:160,2.35805325131758)
--(axis cs:200,2.49965265089513)
--(axis cs:200,2.52164991735686)
--(axis cs:200,2.52164991735686)
--(axis cs:160,2.42461005851569)
--(axis cs:120,2.28972775972154)
--(axis cs:80,2.06675029450919)
--(axis cs:40,1.73457908073713)
--(axis cs:2,0.265621453086222)
--cycle;

\path [draw=color2, fill=color2, opacity=0.2]
(axis cs:2,0.246189340499012)
--(axis cs:2,0.173479884250713)
--(axis cs:40,1.54708735277045)
--(axis cs:80,1.92721606856544)
--(axis cs:120,2.20306888351354)
--(axis cs:160,2.33330148927802)
--(axis cs:200,2.47372651908479)
--(axis cs:200,2.52122482777425)
--(axis cs:200,2.52122482777425)
--(axis cs:160,2.39651324256415)
--(axis cs:120,2.26744630083)
--(axis cs:80,2.08419346219131)
--(axis cs:40,1.65615472311775)
--(axis cs:2,0.246189340499012)
--cycle;

\path [draw=color3, fill=color3, opacity=0.2]
(axis cs:2,0.246940406508118)
--(axis cs:2,0.161303089533645)
--(axis cs:40,1.46804719726822)
--(axis cs:80,1.92150503251727)
--(axis cs:120,2.16570030698365)
--(axis cs:160,2.30544850170594)
--(axis cs:200,2.46484368030077)
--(axis cs:200,2.52580245516353)
--(axis cs:200,2.52580245516353)
--(axis cs:160,2.42025870097303)
--(axis cs:120,2.25465939390586)
--(axis cs:80,2.07292923885206)
--(axis cs:40,1.7197744741813)
--(axis cs:2,0.246940406508118)
--cycle;

\path [draw=color4, fill=color4, opacity=0.2]
(axis cs:2,0.266277665627535)
--(axis cs:2,0.208373244329645)
--(axis cs:40,1.6173644151163)
--(axis cs:80,1.92283664750125)
--(axis cs:120,2.0950937414138)
--(axis cs:160,2.30707489118931)
--(axis cs:200,2.51023444265283)
--(axis cs:200,2.52626810224833)
--(axis cs:200,2.52626810224833)
--(axis cs:160,2.4216957148701)
--(axis cs:120,2.2891498814288)
--(axis cs:80,2.05589993398127)
--(axis cs:40,1.72349073075575)
--(axis cs:2,0.266277665627535)
--cycle;

\path [draw=color5, fill=color5, opacity=0.2]
(axis cs:2,0.267234110917442)
--(axis cs:2,0.213783456427663)
--(axis cs:40,1.57496581637068)
--(axis cs:80,1.91871370613366)
--(axis cs:120,2.12092855260299)
--(axis cs:160,2.34825183608804)
--(axis cs:200,2.48499728565625)
--(axis cs:200,2.52536722223048)
--(axis cs:200,2.52536722223048)
--(axis cs:160,2.41885669757278)
--(axis cs:120,2.24696739883138)
--(axis cs:80,2.06442139859816)
--(axis cs:40,1.68998977990697)
--(axis cs:2,0.267234110917442)
--cycle;

\path [draw=color6, fill=color6, opacity=0.2]
(axis cs:2,0.266314616536063)
--(axis cs:2,0.163262272534928)
--(axis cs:40,1.51207993622442)
--(axis cs:80,1.9247822915536)
--(axis cs:120,2.17760887738257)
--(axis cs:160,2.31513736355282)
--(axis cs:200,2.42936173103389)
--(axis cs:200,2.52723541423064)
--(axis cs:200,2.52723541423064)
--(axis cs:160,2.41206662933616)
--(axis cs:120,2.25996009805983)
--(axis cs:80,2.04076354788826)
--(axis cs:40,1.62509742548201)
--(axis cs:2,0.266314616536063)
--cycle;

\addplot [semithick, color0]
table {%
2 0.236372391596767
40 1.67271889751888
80 1.98275516974917
120 2.23587414872505
160 2.39649473003947
200 2.50736967727028
};
\addplot [semithick, color1]
table {%
2 0.239924793353405
40 1.6616065165807
80 2.01426674298643
120 2.23734164182812
160 2.39133165491664
200 2.510651284126
};
\addplot [semithick, color2]
table {%
2 0.209834612374862
40 1.6016210379441
80 2.00570476537838
120 2.23525759217177
160 2.36490736592109
200 2.49747567342952
};
\addplot [semithick, color3]
table {%
2 0.204121748020882
40 1.59391083572476
80 1.99721713568466
120 2.21017985044476
160 2.36285360133949
200 2.49532306773215
};
\addplot [semithick, color4]
table {%
2 0.23732545497859
40 1.67042757293603
80 1.98936829074126
120 2.1921218114213
160 2.3643853030297
200 2.51825127245058
};
\addplot [semithick, color5]
table {%
2 0.240508783672552
40 1.63247779813883
80 1.99156755236591
120 2.18394797571719
160 2.38355426683041
200 2.50518225394337
};
\addplot [semithick, color6]
table {%
2 0.214788444535496
40 1.56858868085321
80 1.98277291972093
120 2.2187844877212
160 2.36360199644449
200 2.47829857263227
};
\end{axis}

\end{tikzpicture}
    \caption{Mean cross-entropy difference on the validation set vs. input alphabet size. The shaded regions indicate one standard deviation. We include experiments on \UnmarkedReversal{}; no models performed significantly better on average than the LSTM baseline.}
    \label{fig:capacity-mean-std}
\end{figure*}

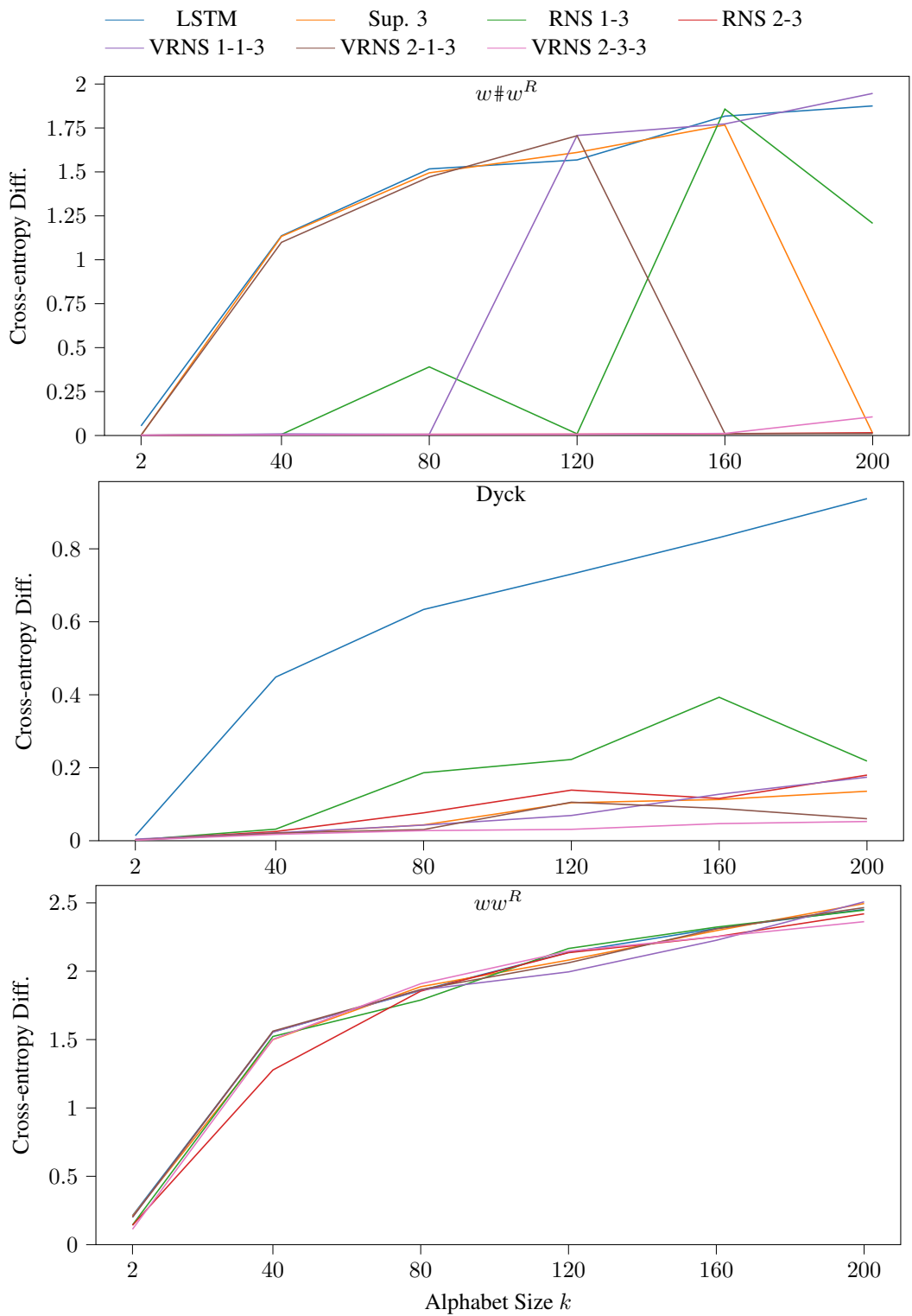
\begin{figure*}
\pgfplotsset{
  every axis/.style={
    height=2.8in,
    width=5.5in
  },
  title style={yshift=-4.5ex},
}
    \centering
    \begin{tikzpicture}
\definecolor{color0}{rgb}{0.12156862745098,0.466666666666667,0.705882352941177}
\definecolor{color1}{rgb}{1,0.498039215686275,0.0549019607843137}
\definecolor{color2}{rgb}{0.172549019607843,0.627450980392157,0.172549019607843}
\definecolor{color3}{rgb}{0.83921568627451,0.152941176470588,0.156862745098039}
\definecolor{color4}{rgb}{0.580392156862745,0.403921568627451,0.741176470588235}
\definecolor{color5}{rgb}{0.549019607843137,0.337254901960784,0.294117647058824}
\definecolor{color6}{rgb}{0.890196078431372,0.466666666666667,0.76078431372549}
\begin{axis}[
  hide axis,
  height=0.7in,
  legend style={
    draw=none,
    /tikz/every even column/.append style={column sep=0.4cm}
  },
  legend columns=4,
  xmin=0,
  xmax=1,
  ymin=0,
  ymax=1
]
\addlegendimage{color0}
\addlegendentry{LSTM}
\addlegendimage{color1}
\addlegendentry{Sup. 3}
\addlegendimage{color2}
\addlegendentry{RNS 1-3}
\addlegendimage{color3}
\addlegendentry{RNS 2-3}
\addlegendimage{color4}
\addlegendentry{VRNS 1-1-3}
\addlegendimage{color5}
\addlegendentry{VRNS 2-1-3}
\addlegendimage{color6}
\addlegendentry{VRNS 2-3-3}
\end{axis}
\end{tikzpicture}
\begin{tikzpicture}

\definecolor{color0}{rgb}{0.12156862745098,0.466666666666667,0.705882352941177}
\definecolor{color1}{rgb}{1,0.498039215686275,0.0549019607843137}
\definecolor{color2}{rgb}{0.172549019607843,0.627450980392157,0.172549019607843}
\definecolor{color3}{rgb}{0.83921568627451,0.152941176470588,0.156862745098039}
\definecolor{color4}{rgb}{0.580392156862745,0.403921568627451,0.741176470588235}
\definecolor{color5}{rgb}{0.549019607843137,0.337254901960784,0.294117647058824}
\definecolor{color6}{rgb}{0.890196078431372,0.466666666666667,0.76078431372549}

\begin{axis}[
minor xtick={},
minor ytick={},
tick align=outside,
tick pos=left,
title={\MarkedReversal{}},
x grid style={white!69.0196078431373!black},
xmin=-7.9, xmax=209.9,
xtick style={color=black},
xtick={2,40,80,120,160,200},
y grid style={white!69.0196078431373!black},
ylabel={Cross-entropy Diff.},
ymin=0, ymax=2.04385225997956,
ytick style={color=black},
ytick={0,0.25,0.5,0.75,1,1.25,1.5,1.75,2,2.25}
]
\addplot [semithick, color0]
table {%
2 0.0546463585653895
40 1.13633128522587
80 1.51699268459551
120 1.5680809045234
160 1.81721006027914
200 1.8753264317012
};
\addplot [semithick, color1]
table {%
2 0.000649277694679817
40 1.13305000465618
80 1.49468192922585
120 1.61070102426083
160 1.76677729288102
200 0.0122619990720145
};
\addplot [semithick, color2]
table {%
2 0.00144695156879743
40 0.00608271297970675
80 0.390453855481036
120 0.00949907881602741
160 1.85745371350706
200 1.20719888742436
};
\addplot [semithick, color3]
table {%
2 0.000800532655235053
40 0.00503407684651758
80 0.00676570135016963
120 0.00754430148025742
160 0.0103202790638761
200 0.0157001467133404
};
\addplot [semithick, color4]
table {%
2 0.000836634687288784
40 0.00893825335010079
80 0.00677733655015045
120 1.70724881021218
160 1.77287897809946
200 1.94655687987076
};
\addplot [semithick, color5]
table {%
2 0.00181450527751947
40 1.09915500387136
80 1.47127329872728
120 1.70569168407059
160 0.0106271210805544
200 0.0106707666013022
};
\addplot [semithick, color6]
table {%
2 0.00292018955833645
40 0.00365275139714827
80 0.00647323533198563
120 0.0073341063555632
160 0.011676126166229
200 0.106179210262532
};
\end{axis}

\end{tikzpicture}
\begin{tikzpicture}

\definecolor{color0}{rgb}{0.12156862745098,0.466666666666667,0.705882352941177}
\definecolor{color1}{rgb}{1,0.498039215686275,0.0549019607843137}
\definecolor{color2}{rgb}{0.172549019607843,0.627450980392157,0.172549019607843}
\definecolor{color3}{rgb}{0.83921568627451,0.152941176470588,0.156862745098039}
\definecolor{color4}{rgb}{0.580392156862745,0.403921568627451,0.741176470588235}
\definecolor{color5}{rgb}{0.549019607843137,0.337254901960784,0.294117647058824}
\definecolor{color6}{rgb}{0.890196078431372,0.466666666666667,0.76078431372549}

\begin{axis}[
minor xtick={},
minor ytick={},
tick align=outside,
tick pos=left,
title={Dyck},
x grid style={white!69.0196078431373!black},
xmin=-7.9, xmax=209.9,
xtick style={color=black},
xtick={2,40,80,120,160,200},
y grid style={white!69.0196078431373!black},
ylabel={Cross-entropy Diff.},
ymin=0, ymax=0.984353833519514,
ytick style={color=black},
ytick={0,0.2,0.4,0.6,0.8,1}
]
\addplot [semithick, color0]
table {%
2 0.0137521703322442
40 0.448560495477739
80 0.633717943172086
120 0.73065842853355
160 0.830802491409513
200 0.937551151998066
};
\addplot [semithick, color1]
table {%
2 0.00261729319862236
40 0.0192677354453643
80 0.0434309140817946
120 0.104173908682112
160 0.11292395908756
200 0.135596787937269
};
\addplot [semithick, color2]
table {%
2 0.00203292299901059
40 0.0317206153889922
80 0.186232624800468
120 0.22266930264733
160 0.39311056672131
200 0.218315118835356
};
\addplot [semithick, color3]
table {%
2 0.00222379813166018
40 0.0252473909546143
80 0.0764198323781042
120 0.138708215524523
160 0.115844608590836
200 0.179842944415813
};
\addplot [semithick, color4]
table {%
2 0.00432020327407157
40 0.0215167173902953
80 0.042823335500211
120 0.0691638231721829
160 0.12739178992594
200 0.174133551361263
};
\addplot [semithick, color5]
table {%
2 0.00193366969840969
40 0.0208613535065236
80 0.0306107132196205
120 0.10555108605003
160 0.088620234415052
200 0.0602674597724553
};
\addplot [semithick, color6]
table {%
2 0.00149752156910599
40 0.0174174301344912
80 0.0276885453172024
120 0.03119147145259
160 0.0468693937521021
200 0.0527337836811332
};
\end{axis}

\end{tikzpicture}
\begin{tikzpicture}

\definecolor{color0}{rgb}{0.12156862745098,0.466666666666667,0.705882352941177}
\definecolor{color1}{rgb}{1,0.498039215686275,0.0549019607843137}
\definecolor{color2}{rgb}{0.172549019607843,0.627450980392157,0.172549019607843}
\definecolor{color3}{rgb}{0.83921568627451,0.152941176470588,0.156862745098039}
\definecolor{color4}{rgb}{0.580392156862745,0.403921568627451,0.741176470588235}
\definecolor{color5}{rgb}{0.549019607843137,0.337254901960784,0.294117647058824}
\definecolor{color6}{rgb}{0.890196078431372,0.466666666666667,0.76078431372549}

\begin{axis}[
minor xtick={},
minor ytick={},
tick align=outside,
tick pos=left,
title={\UnmarkedReversal{}},
x grid style={white!69.0196078431373!black},
xlabel={Alphabet Size \(\displaystyle k\)},
xmin=-7.9, xmax=209.9,
xtick style={color=black},
xtick={2,40,80,120,160,200},
y grid style={white!69.0196078431373!black},
ylabel={Cross-entropy Diff.},
ymin=0, ymax=2.62726247398385,
ytick style={color=black},
ytick={0,0.5,1,1.5,2,2.5,3}
]
\addplot [semithick, color0]
table {%
2 0.212360449652591
40 1.55611169768235
80 1.85942075973022
120 2.13806909252101
160 2.31331302499004
200 2.45210233728601
};
\addplot [semithick, color1]
table {%
2 0.208620349994855
40 1.49942019040013
80 1.88677157974952
120 2.08157571274763
160 2.2967314275141
200 2.49516861251276
};
\addplot [semithick, color2]
table {%
2 0.144113408711615
40 1.52267777710199
80 1.78909516578817
120 2.1663921381762
160 2.32265467816737
200 2.44686356911081
};
\addplot [semithick, color3]
table {%
2 0.141484228874645
40 1.27742880303313
80 1.85452098029852
120 2.13569658261605
160 2.25252823733202
200 2.41986057491533
};
\addplot [semithick, color4]
table {%
2 0.200249913554156
40 1.55501545148885
80 1.8627985068156
120 1.99512410674596
160 2.22626710337352
200 2.50752572599649
};
\addplot [semithick, color5]
table {%
2 0.200817285384686
40 1.56176563206159
80 1.86586825561551
120 2.06172586300997
160 2.30986426250289
200 2.46527466093296
};
\addplot [semithick, color6]
table {%
2 0.112790766249333
40 1.50159342542992
80 1.90857637405824
120 2.14676442949922
160 2.25247447709312
200 2.36253966272009
};
\end{axis}

\end{tikzpicture}
    \caption{Best cross-entropy difference on the validation set vs. input alphabet size. On \MarkedReversal{}, surprisingly, only RNS 2-3 achieved optimal cross-entropy for all alphabet sizes. On the more complicated Dyck language, our new VRNS-RNN (VRNS 2-1-3, VRNS 2-3-3) achieved the best performance for large alphabet sizes. No models performed much better than the LSTM baseline on \UnmarkedReversal{}, although RNS 2-3 performed well for $k = 40$.}
    \label{fig:capacity-best}
\end{figure*}

\section{Heatmaps of stack reading vectors}
\label{sec:reading-heatmap}

In \cref{fig:reading-heatmap} we show heatmaps of stack reading vectors across time on an example string in \MarkedReversal{} when $k = 40$.

\begin{figure*}
    \pgfplotsset{
      every axis/.style={
        height=8in,
        width=1.5in
      },
      yticklabel style={font=\scriptsize},
      xticklabel style={font=\scriptsize}
    }
    \centering
    \begin{tabular}{@{}lll@{}}    
\begin{tikzpicture}

\begin{axis}[
tick align=outside,
tick pos=left,
x grid style={white!69.0196078431373!black},
xlabel={Reading Element},
xmin=0, xmax=6,
xtick style={color=black},
xtick={0.5,1.5,2.5,3.5,4.5,5.5},
xticklabel style={rotate=45.0},
xticklabels={
  {\(\displaystyle (q_{0}, \bot)\)},
  {\(\displaystyle (q_{0}, \sym{0})\)},
  {\(\displaystyle (q_{0}, \sym{1})\)},
  {\(\displaystyle (q_{1}, \bot)\)},
  {\(\displaystyle (q_{1}, \sym{0})\)},
  {\(\displaystyle (q_{1}, \sym{1})\)}
},
y dir=reverse,
y grid style={white!69.0196078431373!black},
ylabel={Input Symbol},
ymin=0, ymax=83,
ytick style={color=black},
ytick={0,1,2,3,4,5,6,7,8,9,10,11,12,13,14,15,16,17,18,19,20,21,22,23,24,25,26,27,28,29,30,31,32,33,34,35,36,37,38,39,40,41,42,43,44,45,46,47,48,49,50,51,52,53,54,55,56,57,58,59,60,61,62,63,64,65,66,67,68,69,70,71,72,73,74,75,76,77,78,79,80,81,82,83},
yticklabels={
  \(\displaystyle \sym{0}\),
  \(\displaystyle \sym{1}\),
  \(\displaystyle \sym{2}\),
  \(\displaystyle \sym{3}\),
  \(\displaystyle \sym{0}\),
  \(\displaystyle \sym{1}\),
  \(\displaystyle \sym{2}\),
  \(\displaystyle \sym{3}\),
  \(\displaystyle \sym{0}\),
  \(\displaystyle \sym{1}\),
  \(\displaystyle \sym{2}\),
  \(\displaystyle \sym{3}\),
  \(\displaystyle \sym{0}\),
  \(\displaystyle \sym{1}\),
  \(\displaystyle \sym{2}\),
  \(\displaystyle \sym{3}\),
  \(\displaystyle \sym{0}\),
  \(\displaystyle \sym{1}\),
  \(\displaystyle \sym{2}\),
  \(\displaystyle \sym{3}\),
  \(\displaystyle \sym{0}\),
  \(\displaystyle \sym{1}\),
  \(\displaystyle \sym{2}\),
  \(\displaystyle \sym{3}\),
  \(\displaystyle \sym{0}\),
  \(\displaystyle \sym{1}\),
  \(\displaystyle \sym{2}\),
  \(\displaystyle \sym{3}\),
  \(\displaystyle \sym{0}\),
  \(\displaystyle \sym{1}\),
  \(\displaystyle \sym{2}\),
  \(\displaystyle \sym{3}\),
  \(\displaystyle \sym{0}\),
  \(\displaystyle \sym{1}\),
  \(\displaystyle \sym{2}\),
  \(\displaystyle \sym{3}\),
  \(\displaystyle \sym{0}\),
  \(\displaystyle \sym{1}\),
  \(\displaystyle \sym{2}\),
  \(\displaystyle \sym{3}\),
  \(\displaystyle \sym{0}\),
  \(\displaystyle \sym{\#}\),
  \(\displaystyle \sym{0}\),
  \(\displaystyle \sym{3}\),
  \(\displaystyle \sym{2}\),
  \(\displaystyle \sym{1}\),
  \(\displaystyle \sym{0}\),
  \(\displaystyle \sym{3}\),
  \(\displaystyle \sym{2}\),
  \(\displaystyle \sym{1}\),
  \(\displaystyle \sym{0}\),
  \(\displaystyle \sym{3}\),
  \(\displaystyle \sym{2}\),
  \(\displaystyle \sym{1}\),
  \(\displaystyle \sym{0}\),
  \(\displaystyle \sym{3}\),
  \(\displaystyle \sym{2}\),
  \(\displaystyle \sym{1}\),
  \(\displaystyle \sym{0}\),
  \(\displaystyle \sym{3}\),
  \(\displaystyle \sym{2}\),
  \(\displaystyle \sym{1}\),
  \(\displaystyle \sym{0}\),
  \(\displaystyle \sym{3}\),
  \(\displaystyle \sym{2}\),
  \(\displaystyle \sym{1}\),
  \(\displaystyle \sym{0}\),
  \(\displaystyle \sym{3}\),
  \(\displaystyle \sym{2}\),
  \(\displaystyle \sym{1}\),
  \(\displaystyle \sym{0}\),
  \(\displaystyle \sym{3}\),
  \(\displaystyle \sym{2}\),
  \(\displaystyle \sym{1}\),
  \(\displaystyle \sym{0}\),
  \(\displaystyle \sym{3}\),
  \(\displaystyle \sym{2}\),
  \(\displaystyle \sym{1}\),
  \(\displaystyle \sym{0}\),
  \(\displaystyle \sym{3}\),
  \(\displaystyle \sym{2}\),
  \(\displaystyle \sym{1}\),
  \(\displaystyle \sym{0}\),
  \(\displaystyle \sym{EOS}\)
}
]
\addplot graphics [includegraphics cmd=\pgfimage,xmin=0, xmax=6, ymin=83, ymax=0] {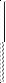};
\end{axis}

\end{tikzpicture} &
\begin{tikzpicture}

\begin{axis}[
tick align=outside,
tick pos=left,
x grid style={white!69.0196078431373!black},
xlabel={Reading Element},
xmin=0, xmax=6,
xtick style={color=black},
xtick={0.5,1.5,2.5,3.5,4.5,5.5},
xticklabel style={rotate=45.0},
xticklabels={
  {\(\displaystyle (q_{0}, \bot)\)},
  {\(\displaystyle (q_{0}, \sym{0})\)},
  {\(\displaystyle (q_{0}, \sym{1})\)},
  {\(\displaystyle (q_{1}, \bot)\)},
  {\(\displaystyle (q_{1}, \sym{0})\)},
  {\(\displaystyle (q_{1}, \sym{1})\)}
},
y dir=reverse,
y grid style={white!69.0196078431373!black},
ymin=0, ymax=83,
ytick style={color=black},
ytick={0,1,2,3,4,5,6,7,8,9,10,11,12,13,14,15,16,17,18,19,20,21,22,23,24,25,26,27,28,29,30,31,32,33,34,35,36,37,38,39,40,41,42,43,44,45,46,47,48,49,50,51,52,53,54,55,56,57,58,59,60,61,62,63,64,65,66,67,68,69,70,71,72,73,74,75,76,77,78,79,80,81,82,83},
yticklabels={
  \(\displaystyle \sym{0}\),
  \(\displaystyle \sym{1}\),
  \(\displaystyle \sym{2}\),
  \(\displaystyle \sym{3}\),
  \(\displaystyle \sym{0}\),
  \(\displaystyle \sym{1}\),
  \(\displaystyle \sym{2}\),
  \(\displaystyle \sym{3}\),
  \(\displaystyle \sym{0}\),
  \(\displaystyle \sym{1}\),
  \(\displaystyle \sym{2}\),
  \(\displaystyle \sym{3}\),
  \(\displaystyle \sym{0}\),
  \(\displaystyle \sym{1}\),
  \(\displaystyle \sym{2}\),
  \(\displaystyle \sym{3}\),
  \(\displaystyle \sym{0}\),
  \(\displaystyle \sym{1}\),
  \(\displaystyle \sym{2}\),
  \(\displaystyle \sym{3}\),
  \(\displaystyle \sym{0}\),
  \(\displaystyle \sym{1}\),
  \(\displaystyle \sym{2}\),
  \(\displaystyle \sym{3}\),
  \(\displaystyle \sym{0}\),
  \(\displaystyle \sym{1}\),
  \(\displaystyle \sym{2}\),
  \(\displaystyle \sym{3}\),
  \(\displaystyle \sym{0}\),
  \(\displaystyle \sym{1}\),
  \(\displaystyle \sym{2}\),
  \(\displaystyle \sym{3}\),
  \(\displaystyle \sym{0}\),
  \(\displaystyle \sym{1}\),
  \(\displaystyle \sym{2}\),
  \(\displaystyle \sym{3}\),
  \(\displaystyle \sym{0}\),
  \(\displaystyle \sym{1}\),
  \(\displaystyle \sym{2}\),
  \(\displaystyle \sym{3}\),
  \(\displaystyle \sym{0}\),
  \(\displaystyle \sym{\#}\),
  \(\displaystyle \sym{0}\),
  \(\displaystyle \sym{3}\),
  \(\displaystyle \sym{2}\),
  \(\displaystyle \sym{1}\),
  \(\displaystyle \sym{0}\),
  \(\displaystyle \sym{3}\),
  \(\displaystyle \sym{2}\),
  \(\displaystyle \sym{1}\),
  \(\displaystyle \sym{0}\),
  \(\displaystyle \sym{3}\),
  \(\displaystyle \sym{2}\),
  \(\displaystyle \sym{1}\),
  \(\displaystyle \sym{0}\),
  \(\displaystyle \sym{3}\),
  \(\displaystyle \sym{2}\),
  \(\displaystyle \sym{1}\),
  \(\displaystyle \sym{0}\),
  \(\displaystyle \sym{3}\),
  \(\displaystyle \sym{2}\),
  \(\displaystyle \sym{1}\),
  \(\displaystyle \sym{0}\),
  \(\displaystyle \sym{3}\),
  \(\displaystyle \sym{2}\),
  \(\displaystyle \sym{1}\),
  \(\displaystyle \sym{0}\),
  \(\displaystyle \sym{3}\),
  \(\displaystyle \sym{2}\),
  \(\displaystyle \sym{1}\),
  \(\displaystyle \sym{0}\),
  \(\displaystyle \sym{3}\),
  \(\displaystyle \sym{2}\),
  \(\displaystyle \sym{1}\),
  \(\displaystyle \sym{0}\),
  \(\displaystyle \sym{3}\),
  \(\displaystyle \sym{2}\),
  \(\displaystyle \sym{1}\),
  \(\displaystyle \sym{0}\),
  \(\displaystyle \sym{3}\),
  \(\displaystyle \sym{2}\),
  \(\displaystyle \sym{1}\),
  \(\displaystyle \sym{0}\),
  \(\displaystyle \sym{EOS}\)
}
]
\addplot graphics [includegraphics cmd=\pgfimage,xmin=0, xmax=6, ymin=83, ymax=0] {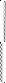};
\end{axis}

\end{tikzpicture} &
\begin{tikzpicture}

\begin{axis}[
tick align=outside,
tick pos=left,
x grid style={white!69.0196078431373!black},
xlabel={Reading Element},
xmin=0, xmax=6,
xtick style={color=black},
xtick={0.5,1.5,2.5,3.5,4.5,5.5},
xticklabel style={rotate=45.0},
xticklabels={
  {\(\displaystyle (q_{0}, \bot)\)},
  {\(\displaystyle (q_{0}, \sym{0})\)},
  {\(\displaystyle (q_{0}, \sym{1})\)},
  {\(\displaystyle (q_{1}, \bot)\)},
  {\(\displaystyle (q_{1}, \sym{0})\)},
  {\(\displaystyle (q_{1}, \sym{1})\)}
},
y dir=reverse,
y grid style={white!69.0196078431373!black},
ymin=0, ymax=83,
ytick style={color=black},
ytick={0,1,2,3,4,5,6,7,8,9,10,11,12,13,14,15,16,17,18,19,20,21,22,23,24,25,26,27,28,29,30,31,32,33,34,35,36,37,38,39,40,41,42,43,44,45,46,47,48,49,50,51,52,53,54,55,56,57,58,59,60,61,62,63,64,65,66,67,68,69,70,71,72,73,74,75,76,77,78,79,80,81,82,83},
yticklabels={
  \(\displaystyle \sym{0}\),
  \(\displaystyle \sym{1}\),
  \(\displaystyle \sym{2}\),
  \(\displaystyle \sym{3}\),
  \(\displaystyle \sym{0}\),
  \(\displaystyle \sym{1}\),
  \(\displaystyle \sym{2}\),
  \(\displaystyle \sym{3}\),
  \(\displaystyle \sym{0}\),
  \(\displaystyle \sym{1}\),
  \(\displaystyle \sym{2}\),
  \(\displaystyle \sym{3}\),
  \(\displaystyle \sym{0}\),
  \(\displaystyle \sym{1}\),
  \(\displaystyle \sym{2}\),
  \(\displaystyle \sym{3}\),
  \(\displaystyle \sym{0}\),
  \(\displaystyle \sym{1}\),
  \(\displaystyle \sym{2}\),
  \(\displaystyle \sym{3}\),
  \(\displaystyle \sym{0}\),
  \(\displaystyle \sym{1}\),
  \(\displaystyle \sym{2}\),
  \(\displaystyle \sym{3}\),
  \(\displaystyle \sym{0}\),
  \(\displaystyle \sym{1}\),
  \(\displaystyle \sym{2}\),
  \(\displaystyle \sym{3}\),
  \(\displaystyle \sym{0}\),
  \(\displaystyle \sym{1}\),
  \(\displaystyle \sym{2}\),
  \(\displaystyle \sym{3}\),
  \(\displaystyle \sym{0}\),
  \(\displaystyle \sym{1}\),
  \(\displaystyle \sym{2}\),
  \(\displaystyle \sym{3}\),
  \(\displaystyle \sym{0}\),
  \(\displaystyle \sym{1}\),
  \(\displaystyle \sym{2}\),
  \(\displaystyle \sym{3}\),
  \(\displaystyle \sym{0}\),
  \(\displaystyle \sym{\#}\),
  \(\displaystyle \sym{0}\),
  \(\displaystyle \sym{3}\),
  \(\displaystyle \sym{2}\),
  \(\displaystyle \sym{1}\),
  \(\displaystyle \sym{0}\),
  \(\displaystyle \sym{3}\),
  \(\displaystyle \sym{2}\),
  \(\displaystyle \sym{1}\),
  \(\displaystyle \sym{0}\),
  \(\displaystyle \sym{3}\),
  \(\displaystyle \sym{2}\),
  \(\displaystyle \sym{1}\),
  \(\displaystyle \sym{0}\),
  \(\displaystyle \sym{3}\),
  \(\displaystyle \sym{2}\),
  \(\displaystyle \sym{1}\),
  \(\displaystyle \sym{0}\),
  \(\displaystyle \sym{3}\),
  \(\displaystyle \sym{2}\),
  \(\displaystyle \sym{1}\),
  \(\displaystyle \sym{0}\),
  \(\displaystyle \sym{3}\),
  \(\displaystyle \sym{2}\),
  \(\displaystyle \sym{1}\),
  \(\displaystyle \sym{0}\),
  \(\displaystyle \sym{3}\),
  \(\displaystyle \sym{2}\),
  \(\displaystyle \sym{1}\),
  \(\displaystyle \sym{0}\),
  \(\displaystyle \sym{3}\),
  \(\displaystyle \sym{2}\),
  \(\displaystyle \sym{1}\),
  \(\displaystyle \sym{0}\),
  \(\displaystyle \sym{3}\),
  \(\displaystyle \sym{2}\),
  \(\displaystyle \sym{1}\),
  \(\displaystyle \sym{0}\),
  \(\displaystyle \sym{3}\),
  \(\displaystyle \sym{2}\),
  \(\displaystyle \sym{1}\),
  \(\displaystyle \sym{0}\),
  \(\displaystyle \sym{EOS}\)
}
]
\addplot graphics [includegraphics cmd=\pgfimage,xmin=0, xmax=6, ymin=83, ymax=0] {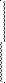};
\end{axis}

\end{tikzpicture}
    \end{tabular}
    \caption{Heatmaps of $\mathbf{r}_t$ over time on a string from \MarkedReversal{} when $k = 40$, generated from the best RNS 2-3 model (left) and two other random restarts (middle, right). The $w$ string repeats the pattern $\sym{0}\sym{1}\sym{2}\sym{3}$, which is clearly seen in the reading vectors. Black = 1, and white = 0.}
    \label{fig:reading-heatmap}
\end{figure*}

\section{Additional details for natural language experiments}
\label{sec:ptb-extra-details}

Here we describe the Penn Treebank experiments in more detail. For each architecture, we used a word embedding layer of the same size as the hidden state. In order to preserve context across batches, we trained all models using truncated backpropagation through time (BPTT), treating each dataset as one long sequence and limiting batches to a length of 35. As we noted previously \citep{dusell+:2022}, training stack RNNs with truncated BPTT requires bounding the size of the stack data structure, as having it grow indefinitely from batch to batch would be computationally infeasible. We limited the depth of the superposition stack to 10, following \citet{yogatama+:2018} and our previous paper \citep{dusell+:2022}. To limit the size of the RNS-RNN, we used the incremental execution technique we devised previously \citep{dusell+:2022}, which limits non-zero entries in $\gamma$ to those where $t - i \leq D$, for some constant window size $D$. We applied the same technique to the VRNS-RNN by imposing the same limitation on both $\gamma$ and $\bm{\zeta}$, restricting non-zero entries of $\bm{\zeta}$ to those where $t - i \leq D$. In both cases, we set $D = 35$. We used the standard train/validation/test splits for the Penn Treebank.

We trained each model by minimizing its cross-entropy (averaged over the timestep dimension of each batch) on the training set, using per-symbol perplexity on the validation set as the early stopping criterion. We optimized the parameters of the model with simple SGD. For each training run, we randomly sampled the initial learning rate from a log-uniform distribution over $[1, 100]$, and the gradient clipping threshold from a log-uniform distribution over $[0.0112, 1.12]$. We initialized all parameters uniformly from $[-0.05, 0.05]$. We used a mini-batch size of 32. We divided the learning rate by 1.5 whenever validation perplexity did not improve after an epoch, and we stopped early after 2 epochs of no improvement.

\end{document}